\documentclass[twoside]{article}

\usepackage[preprint]{aistats2027}
\usepackage[round]{natbib}

\usepackage[utf8]{inputenc} % allow utf-8 input
\usepackage[T1]{fontenc}    % use 8-bit T1 fonts
\usepackage{hyperref}       % hyperlinks
\usepackage{url}            % simple URL typesetting
\usepackage{booktabs}       % professional-quality tables
\usepackage{amsfonts}       % blackboard math symbols
\usepackage{nicefrac}       % compact symbols for 1/2, etc.
\usepackage{microtype}      % microtypography
\usepackage{xcolor}         % colors
\usepackage{amsmath, amssymb, amsthm}
\usepackage[ruled,vlined]{algorithm2e} 
\newtheorem{lemma}{Lemma}
\usepackage{multirow}
\usepackage{graphicx}
\usepackage{float}
\usepackage{pifont}
\newcommand{\cmark}{\ding{51}}%
\newcommand{\xmark}{\ding{55}}%
\usepackage{mathtools}
\newtheorem{remark}{Remark} 
\usepackage{subcaption}
\usepackage{caption}
\usepackage{dblfloatfix}
\usepackage{cuted}
\usepackage{placeins}
\FloatBarrier

\usepackage{titlesec}
\titlespacing*{\subsection}{0pt}{6pt plus 1pt minus 1pt}{2pt plus 1pt}
\titlespacing*{\subsubsection}{0pt}{4pt plus 1pt minus 1pt}{1pt plus 1pt}
\titlespacing*{\paragraph}{0.2pt}{2pt plus 1pt minus 1pt}{1em}

\definecolor{Wgreen}{HTML}{01a049}

\newcommand{\warn}[1]{}

\begin{document}

% If your paper is accepted and the title of your paper is very long,
% the style will print as headings an error message. Use the following
% command to supply a shorter title of your paper so that it can be
% used as headings.
%
%\runningtitle{I use this title instead because the last one was very long}

% If your paper is accepted and the number of authors is large, the
% style will print as headings an error message. Use the following
% command to supply a shorter version of the author names so that
% they can be used as headings (for example, use only the surnames)
%
%\runningauthor{Surname 1, Surname 2, Surname 3, ...., Surname n}

\twocolumn[
\aistatstitle{SpatialUQ: Post-Hoc Uncertainty Quantification from Spatial Consistency in Black-Box Vision Models}

\aistatsauthor{ Md Kawsher Mahbub$^1$ \quad Milon Biswas$^2$ \quad Mirza Niaz Morshed$^3$ \quad Wei Yu$^2$ }

\aistatsaddress{ 
  $^1$NPI University of Bangladesh, $^2$Towson University, $^3$University of Asia Pacific \\
  \texttt{kawsher@npiub.edu.bd, mbiswas1@students.towson.edu, 25214015@uap-bd.edu, wyu@towson.edu}
}
]

\begin{abstract}
Clinical vision models are often deployed as frozen black boxes with no access to internals, retraining, or ground truth at inference time. We introduce \textbf{SpatialUQ}, a post-hoc uncertainty method using only output probabilities. It measures the Jensen-Shannon divergence between the global prediction and the mean of five fixed spatial crops in six deterministic forward passes. The premise is simple, trustworthy predictions are spatially consistent. On NIH ChestX-ray14 (DenseNet-121, $N{=}25{,}596$), our Multicrop Uncertainty Score (MUS) reaches $0.784$ failure-detection AUC versus $0.664$ for MC-Dropout ($p{<}10^{-6}$) at one-fifth the compute, with native calibration ($\text{SCE}{=}0.049$ vs.\ $0.127$ for $\ell_1$), the best-calibrated among methods above 0.78 AUC. A supervised fusion of MUS with entropy, confidence, and $\ell_1$ reaches $0.832$, outperforming a five-member ensemble ($0.813$). MUS scales with model quality, reaching $0.899$ with BiomedCLIP ($\rho = 0.846$), while this relationship remains meaningful in-distribution ($\rho = 0.523$) but breaks down under severe distribution shift (VinBigData, $\rho = 0.027$). MUS is well-suited to diffuse findings but is less dependable for small focal lesions such as nodules. Code and experimental materials are publicly available at
\url{https://huggingface.co/datasets/kawsher11/SpatialUQ}.

\end{abstract}

\section{Introduction}\label{sec:intoduction}

Deep neural networks can fail silently on out-of-distribution inputs~\citep{amodei2016concrete,hendrycks2019benchmarking}, posing patient-safety risks in medical imaging where confident errors are consequential~\citep{leibig2017leveraging,kendall2017uncertainties}. Reliable deployment therefore requires inference-time failure detection without ground truth. However, frozen black-box classifiers exclude many existing UQ approaches. Bayesian methods such as MC-Dropout~\citep{gal2016dropout} require training-time modifications, ensembles~\citep{lakshminarayanan2017simple} and TTA~\citep{krizhevsky2012imagenet} add substantial compute, while post-hoc methods such as DDU~\citep{mukhoti2023deep}, Mahalanobis~\citep{lee2018simple}, energy~\citep{liu2020energy}, ODIN~\citep{liang2017enhancing}, and conformal prediction~\citep{angelopoulos2023conformal} require model internals, feature distributions, or calibration data. Recent consistency-based~\citep{khan2024consistency} and augmentation-sensitivity~\citep{shu2026uncovering} methods operate in related settings but rely on linguistic or photometric perturbations, leaving geometric evidence largely unexplored.

We propose \textbf{SpatialUQ} grounded on a single premise: a reliable model produces \emph{spatially consistent} predictions across an image and its sub-regions. Well-localized features create strong agreement between global and crop-level predictions. On the other hand,  distributional mismatch, low image quality, or atypical presentations make them diverge. We measure this divergence as the Jensen-Shannon Divergence (JSD) between the global prediction and the mean of five fixed spatial crops, in six deterministic forward passes. This geometric signal needs no internals, sampling, or reference data, and differs fundamentally from TTA~\citep{krizhevsky2012imagenet,wang2019aleatoric} (Lemma~\ref{lemma:bound}; Section~\ref{sec:photometric_perturbation}).  A supervised logistic fusion of MUS with entropy, confidence, and $\ell_1$ improves failure detection further, except under severe shift where MC-Dropout remains preferable.

\textbf{Our key contributions are summarized as follows:}
\begin{itemize}
\item \textbf{SpatialUQ}: A post-hoc uncertainty framework for frozen black-box classifiers using only output probabilities, with no internals, gradients, training data, or retraining. MUS is natively calibrated and extends to frozen foundation models, reaching $0.899$ AUC with BiomedCLIP ($\rho=0.846$). Spatial perturbations provide the key signal, with photometric JSD reducing AUC from $0.784$ to $0.755$.

\item \textbf{Formalization and self-diagnosing deployment gate}: A formal bound connects MUS to mean absolute spatial probability shift, while a small labeled calibration sample provides a deployment diagnostic. The correlation remains meaningful in-distribution ($\rho=0.523$) but collapses under severe shift (VinBigData, $\rho=0.027$).

\item \textbf{State-of-the-art failure detection at one-fifth the cost}: MUS achieves $0.784$ AUC on NIH ChestX-ray14, outperforming MC-Dropout ($0.664$, $p<10^{-6}$) with 6 vs.\ 30 forward passes. Fusion reaches $0.832$ AUC, exceeding a five-member ensemble ($0.813$) without retraining.

\item \textbf{Cross-domain transferability}: MUS achieves $0.708$ AUC on CheXpert, a $0.109$ gain over MC-Dropout; fusion reaches $0.917$--$0.937$ on ImageNet and $0.885$--$0.913$ on MS COCO.

\end{itemize}

Spatial inconsistency is complementary rather than universal, it excels when models are overconfident on multi-label tasks but degrades when failures are spatially uniform.

\section{Related Work}\label{sec:related_work}

\paragraph{Bayesian and stochastic uncertainty estimation.}
MC-Dropout approximates Bayesian inference via $T$ stochastic forward passes but requires dropout layers to be present at training time, making it inapplicable to frozen checkpoints~\citep{gal2016dropout}. Bayesian weight approximations offer principled posteriors but scale poorly~\citep{graves2011practical,blundell2015weight}. Deep Ensembles achieve strong calibration at the cost of multiplying compute~\citep{lakshminarayanan2017simple}. A recent benchmark of 13 such techniques on multi-label chest X-ray data confirms no single method is universally optimal~\citep{baur2025benchmarking}. SpatialUQ imposes no training-time assumptions

\paragraph{Test-time augmentation and spatial perturbation.}
TTA treats prediction variance across augmentations as an uncertainty proxy~\citep{krizhevsky2012imagenet,wang2019aleatoric,ayhan2018test}, though strategy matters more than diversity~\citep{shanmugam2021better}. Geometric and photometric augmentations confound spatial ambiguity with color/contrast sensitivity.  SpatialUQ instead measures whether predictions are stable across spatial sub-regions a geometric property of evidence localization formalized in Lemma~\ref{lemma:bound} and compared against photometric TTA-JSD in Section~\ref{sec:photometric_perturbation}.

\paragraph{Post-hoc feature- and logit-based methods.}
Mahalanobis distance~\citep{lee2018simple}, DDU ~\citep{mukhoti2021deterministic}, energy scoring, and ODIN~\citep{liu2020energy,liang2017enhancing} operate on training-set feature distributions or logits. Conformal prediction provides distribution-free coverage guaranties but constructs prediction sets rather than ranking inputs by failure probability~\citep{angelopoulos2023conformal}. SpatialUQ produces a scalar triage score. Combining MUS-based ranking with conformal calibration is a natural future direction.

\paragraph{Uncertainty in medical imaging.}
MC-Dropout has been applied to retinopathy screening~\citep{leibig2017leveraging} and brain segmentation~\citep{roy2019bayesian}. The aleatoric/epistemic distinction was formalized by~\citet{kendall2017uncertainties} and  clinical motivation is surveyed by~\citet{begoli2019need}. Temperature scaling addresses overconfidence~\citep{guo2017calibration} and selective prediction provides the risk-coverage framework for triage~\citep{geifman2017selective}. We follow~\citet{nado2021uncertainty}, reporting ECE, AURC, E-AURC, AUPR, and FPR at fixed TPR.

\paragraph{Failure and OOD detection.}
Failure detection without ground truth has been studied in object detection~\citep{miller2019evaluating}, with likelihood-ratio scores proposed as OOD detectors~\citep{ren2019likelihood}. Our MS COCO evaluation extends SpatialUQ to detection via a multi-label Bernoulli formulation. Prior calibration work on ViT~\citep{fort2019deep,minderer2021revisiting} and globally coherent self-attention~\citep{liu2021swin} motivates our investigation of masking versus up-sampling for crop construction.

\paragraph{Divergence measures and output-space uncertainty.}
Predictive entropy~\citep{shannon1948mathematical} and mutual information~\citep{houlsby2011bayesian} are canonical output-space baselines. JSD's symmetry, boundedness in $[0,\log 2]$, and robustness to zero probability mass make it well-suited to multi-label sigmoid outputs~\citep{goodfellow2014generative,hinton2015distilling}. To our knowledge, SpatialUQ is the first to apply JSD between global and local spatial predictions as a failure detection proxy, introducing a geometrically grounded interpretation requiring no model internals.

\section{Our Approach}\label{sec:our_approach}

SpatialUQ operationalizes the spatial consistency hypothesis through a deterministic five-crop decomposition. We formalize the score for single- and multi-label settings, describe the optional linear fusion, and bound the score against per-class probability shifts. Figure~\ref{fig:pipeline} illustrates the pipeline and qualitative examples.

\begin{figure*}[!ht]
    \centering
    \begin{subfigure}[b]{0.49\textwidth}
        \centering
        \includegraphics[width=\textwidth]{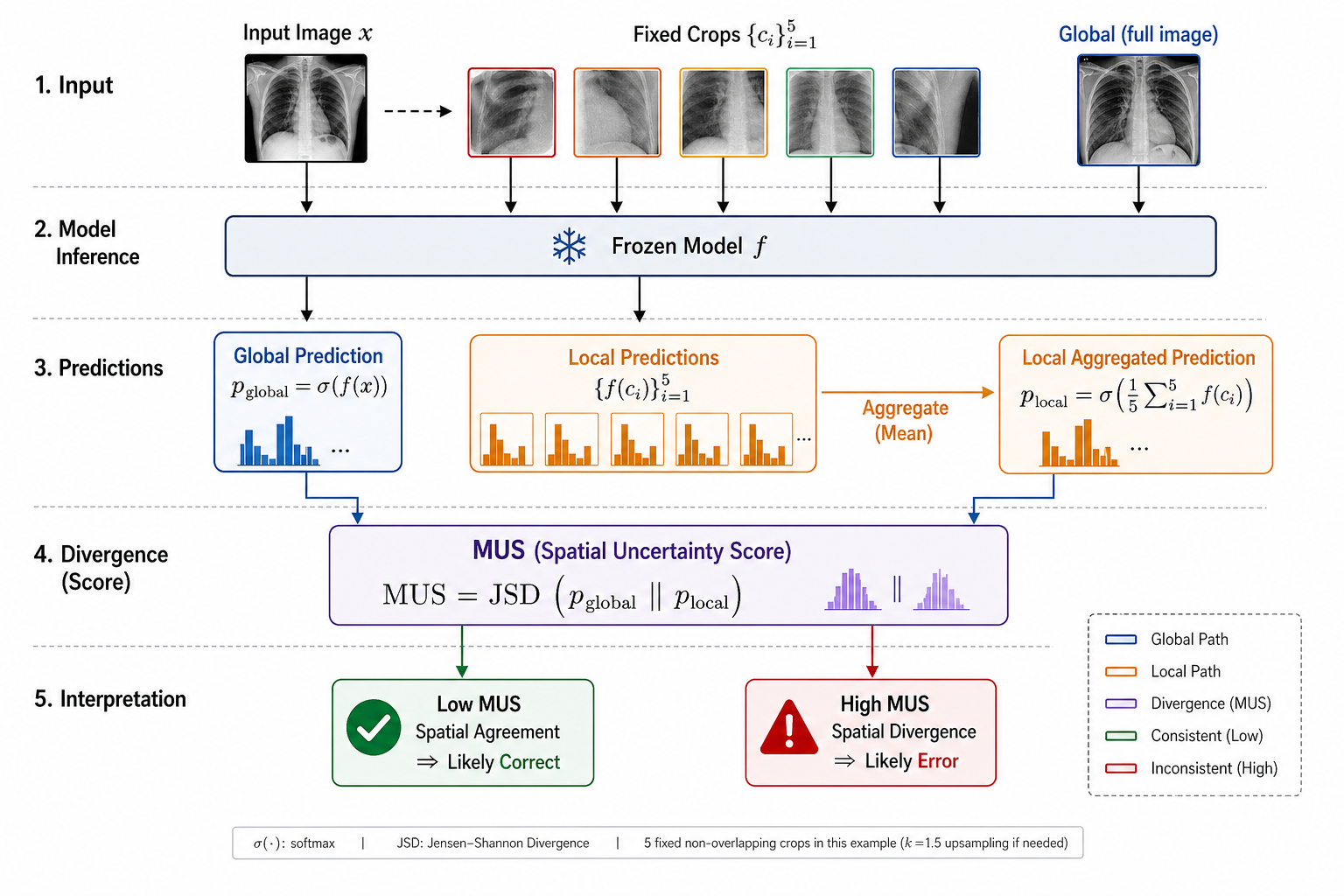}
        \caption{SpatialUQ Pipeline Overview.}
        \label{fig:pipeline_overview}
    \end{subfigure}
    \hfill
    \begin{subfigure}[b]{0.49\textwidth}
        \centering
        \includegraphics[width=\textwidth]{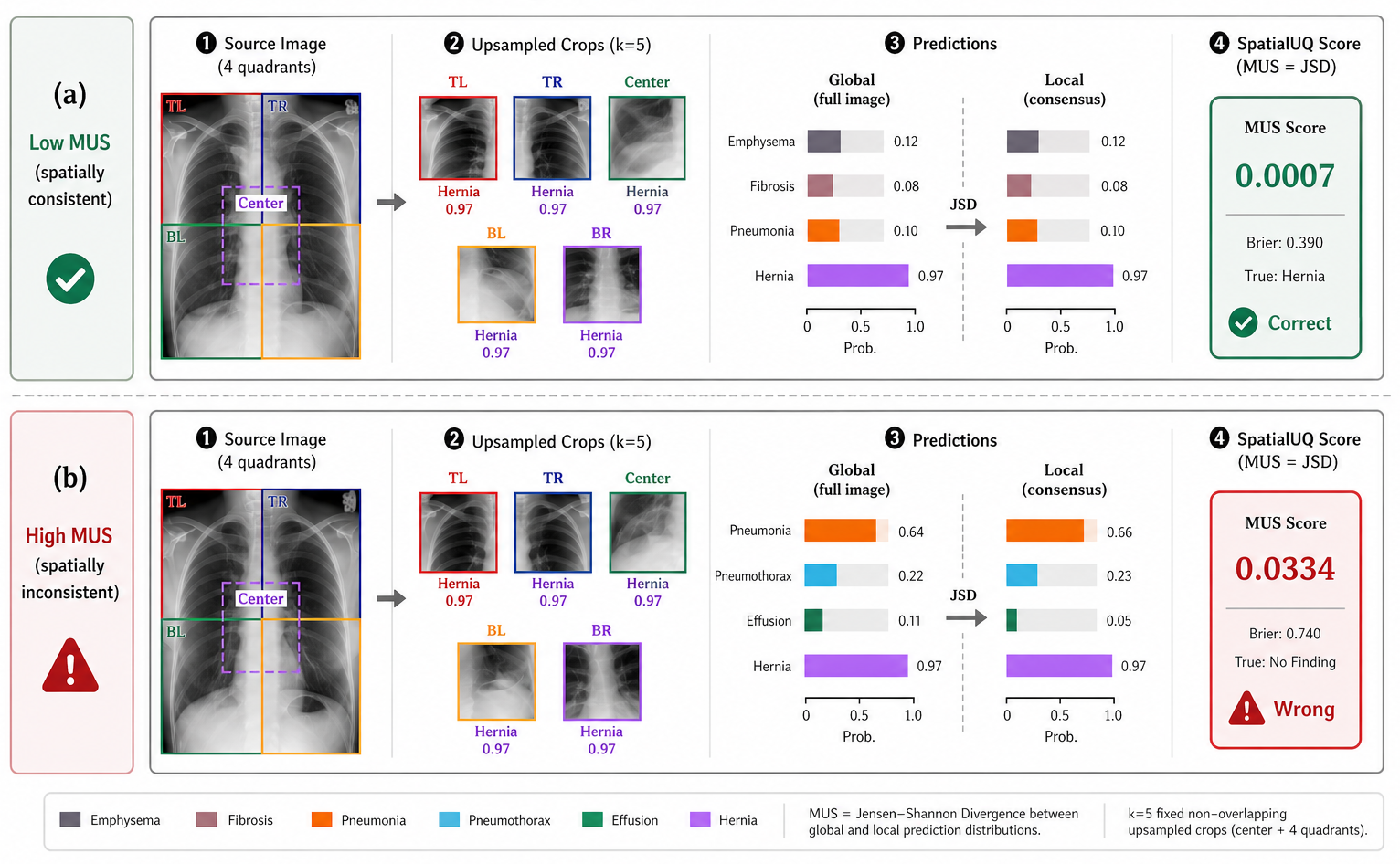}
        \caption{Qualitative Examples: Low vs.\ High MUS.}
        \label{fig:qualitative_examples}
    \end{subfigure}
    \caption{(a) The five-step inference pipeline computing JSD between global and aggregated local predictions. (b) High spatial inconsistency (MUS) successfully flags prediction errors.}
    \label{fig:pipeline}
\end{figure*}

\subsection{Spatial Decomposition}

Let $f: \mathcal{X} \to \mathcal{P}$ be a frozen classifier mapping  $x \in \mathcal{X}$ to either a probability simplex $\Delta^{C-1}$  (single-label) or $[0,1]^C$ under independent sigmoid activations  (multi-label), accessible only via output probabilities. We seek a 
deterministic scalar $s(x) \geq 0$, computable in a small constant  number of forward passes, that is positively correlated with prediction error (Brier score) at inference time.

A reliable model gives spatially consistent predictions, inconsistency signals that confidence is not anchored to stable image content. Each $224{\times}224$ input is decomposed into five fixed spatial crops, four non-overlapping quadrants and one overlapping center crop (in $[y_1, x_1, y_2, x_2]$ format): $\mathcal{C} = \{(0,0,112,112),\ (0,112,112,224),\ (112,0,224,112),\newline
    (112,112,224,224),\ (56,56,168,168)\}$, fixed across all inputs and architectures. Each crop is bi-linearly upsampled to $224{\times}224$ before passing through $f$. The aggregate local prediction is:
\begin{equation}
    p_{\text{local}}(x) = \frac{1}{5}\sum_{k=1}^{5} 
    f\!\left(\text{upsample}(x[\mathcal{C}_k])\right)
\end{equation}i
Together with $p_{\text{global}}(x) = f(x)$, this requires exactly \textbf{six forward passes} through an unmodified model.

\subsection{Uncertainty Score via Jensen-Shannon Divergence}

We quantify spatial inconsistency via JSD between $p_{\text{global}}$ and $p_{\text{local}}$ (bounded in $[0, \log 2]$, symmetric, well-defined under zero probability mass).

\paragraph{Single-label tasks (categorical JSD).}
For softmax outputs $p_{\text{global}}, p_{\text{local}} \in 
\Delta^{C-1}$, with mixture $m = \frac{1}{2}(p_{\text{global}} + 
p_{\text{local}})$:
\begin{equation}
    \begin{split}
    s_{\text{cat}}(x) &= \mathrm{JSD}(p_{\text{global}} \| p_{\text{local}}) \\
    &= \frac{1}{2}\,\mathrm{KL}(p_{\text{global}} \| m) + 
    \frac{1}{2}\,\mathrm{KL}(p_{\text{local}} \| m)
    \end{split}
\end{equation}
Applied to ImageNet-1k and MS COCO (per-class detection confidences as an 80-dimensional vector).

\paragraph{Multi-label tasks (Bernoulli JSD).}
Each class $c$ induces an independent Bernoulli distribution with 
mixture $m_c = \frac{1}{2}(p_{\text{global},c} + p_{\text{local},c})$:
\begin{equation}
    \begin{split}
    \mathrm{JSD}_c(p,q) = \frac{1}{2}\Big[&p_c \log\frac{p_c}{m_c}
    + (1-p_c)\log\frac{1-p_c}{1-m_c} \\
    &+ q_c \log\frac{q_c}{m_c} 
    + (1-q_c)\log\frac{1-q_c}{1-m_c}\Big]
    \end{split}
\end{equation}
\begin{equation}
    s_{\text{bern}}(x) = \frac{1}{C}\sum_{c=1}^{C} 
    \mathrm{JSD}_c\!\left(p_{\text{global},c},\, p_{\text{local},c}\right)
\end{equation}
Applied to NIH ChestX-ray14 ($C{=}14$), CheXpert ($C{=}10$), and VinBigData ($C{=}10$). Algorithm~\ref{alg:spatialuq} summarises the procedure: one forward pass for $p_{\text{global}}$, five for the upsampled crops ($p_1,\ldots,p_5$), average to $p_{\text{local}}$, and return $s_{\text{cat}}$ or $s_{\text{bern}}$. The pipeline is deterministic, stateless, and adds no overhead beyond the six forward passes.

\begin{algorithm}[!ht]
\SetAlgoLined\DontPrintSemicolon
\KwIn{Frozen model $f$, image $x \in \mathbb{R}^{224\times224\times3}$, task $\in \{\text{single-label, multi-label}\}$}
\KwOut{Scalar uncertainty score $s(x) \geq 0$}
$p_{\text{global}} \gets f(x)$\tcp*{forward pass 1}
\For{$k = 1$ \KwTo $5$}{
    $p_k \gets f\!\left(\text{upsample}(x[\mathcal{C}_k],\ 224{\times}224)\right)$\tcp*{forward passes 2-6}
}
$p_{\text{local}} \gets \frac{1}{5}\sum_{k=1}^{5} p_k$\;
$s \gets s_{\text{cat}}(x)$ \textbf{if} single-label \textbf{else} $s \gets s_{\text{bern}}(x)$\;
\KwRet{$s$}
\caption{SpatialUQ}\label{alg:spatialuq}
\end{algorithm}

\subsection{Linear Fusion (Supervised Extension)}
This supervised extension fits logistic weights on a small labeled hold-out; the unsupervised MUS itself needs no labels. It combines complementary output signals via logistic regression:
\begin{equation}
    \begin{split}
    s_{\text{fused}}(x) = \sigma\!\big(&\alpha_0 + \alpha_1\, s(x) + \alpha_2\, H(x) \\
    &+ \alpha_3\,(1 - \max_c p_c) + \alpha_4\, d_{\text{$\ell_1$}}(x)\big)
    \end{split}
\end{equation}
Here $d_{\text{$\ell_1$}}(x) = \frac{1}{C}\sum_c |p_{\text{global},c} - p_{\text{local},c}|$ and $\sigma(\cdot)$ is the sigmoid. Coefficients $\alpha_i$ come from 5-fold cross-validation on a small labeled hold-out: the validation set for NIH, or a random 10\% of the test set for zero-shot settings (remaining 90\% for evaluation only), preserving $f$'s zero-retraining property. Ablation is in Table~\ref{tab:ablation}.

\subsection{Information-Theoretic Bound}
\label{sec:theory}

\begin{lemma}[Spatial Disagreement Bound]
\label{lemma:bound}
For each class $c$, let $p_c \coloneqq p_{\normalfont\text{global},c}$, $q_c \coloneqq p_{\normalfont\text{local},c} \in [0,1]$. Then:
\begin{equation}
    \frac{1}{C}\sum_{c=1}^{C}\bigl|p_{\normalfont\text{global},c} - p_{\normalfont\text{local},c}\bigr| \;\leq\; \sqrt{2 \cdot s_{\normalfont\text{bern}}(x)}
\end{equation}
Here, $s_{\normalfont\text{bern}}(x)$ is in nats, matching the implementation. The per-class bound is tight in the limit $p_c, q_c \to \frac{1}{2}$, where the ratio of the two sides tends to $1$.
\end{lemma}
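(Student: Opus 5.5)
The plan is to prove the inequality one class at a time using Pinsker's inequality, and then average over classes using concavity of the square root. Nothing about the spatial crops is needed. The statement is a fact about pairs of Bernoulli distributions, applied with $p_c=p_{\text{global},c}$ and $q_c=p_{\text{local},c}$.

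\textbf{Step 1 (per-class bound).} Fix $c$. Write $P=\mathrm{Bern}(p_c)$, $Q=\mathrm{Bern}(q_c)$ and $M=\mathrm{Bern}(m_c)$, so that $\mathrm{JSD}_c=\tfrac12[\mathrm{KL}(P\|M)+\mathrm{KL}(Q\|M)]$. For Bernoulli laws the total variation distance is $\mathrm{TV}(P,M)=|p_c-m_c|=\tfrac12|p_c-q_c|$, and likewise $\mathrm{TV}(Q,M)=\tfrac12|p_c-q_c|$. Pinsker's inequality in nats, $\mathrm{KL}\ge 2\,\mathrm{TV}^2$, then gives $\mathrm{KL}(P\|M)\ge \tfrac12(p_c-q_c)^2$ and the same lower bound for $\mathrm{KL}(Q\|M)$. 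Averaging the two yields
\begin{equation*}
\mathrm{JSD}_c(p_c,q_c)\ \ge\ \tfrac12 (p_c-q_c)^2,
\qquad\text{that is,}\qquad
|p_c-q_c|\le\sqrt{2\,\mathrm{JSD}_c}.
\end{equation*}
The boundary cases $p_c,q_c\in\{0,1\}$ are covered by the convention $0\log 0=0$. In these cases $m_c$ is strictly positive wherever $P$ or $Q$ puts mass, so both KL terms are finite and Pinsker still applies.

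\textbf{Step 2 (averaging).} Average the per-class bound over $c$ and apply Jensen's inequality to the concave map $t\mapsto\sqrt{t}$:
\begin{equation*}
\frac1C\sum_{c}|p_c-q_c|\le \frac1C\sum_c\sqrt{2\,\mathrm{JSD}_c}\le\sqrt{\frac{2}{C}\sum_c \mathrm{JSD}_c}=\sqrt{2\,s_{\text{bern}}(x)}.
\end{equation*}
The second inequality is an equality when all the $\mathrm{JSD}_c$ are equal.

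\textbf{Step 3 (tightness).} This is the only step requiring care, though it is still routine. Set $\delta=p_c-q_c$ and Taylor-expand $\mathrm{JSD}_c$ around $p_c=q_c$. To second order, $\mathrm{KL}(\mathrm{Bern}(a)\|\mathrm{Bern}(m))\approx (a-m)^2/\bigl(2m(1-m)\bigr)$. Both KL terms have $(a-m)^2=\delta^2/4$, so
\begin{equation*}
\mathrm{JSD}_c\approx \frac{\delta^2}{8\,m_c(1-m_c)}.
\end{equation*}
At $m_c=\tfrac12$ this equals $\delta^2/2$. Hence $|\delta|/\sqrt{2\,\mathrm{JSD}_c}\to 1$ as $p_c,q_c\to\tfrac12$, which is the claimed per-class tightness.

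The main obstacle is keeping the constants and log base consistent in Steps 1 and 3. Pinsker's constant $2$ holds only in nats, which is why the statement fixes nats. The factor $\tfrac12$ relating $|p_c-m_c|$ to $|p_c-q_c|$ must also be tracked exactly. One last point: the tightness claim concerns the per-class inequality only. For the averaged bound, tightness additionally requires equal per-class divergences, because of the Jensen step in Step 2.
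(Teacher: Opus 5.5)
Your proof is correct and follows the paper's argument essentially step for step. You apply Pinsker separately to $\mathrm{KL}(p_c\|m_c)$ and $\mathrm{KL}(q_c\|m_c)$ with $\mathrm{TV}=\tfrac12|p_c-q_c|$, use Jensen on the concave square root to aggregate over classes, and use a second-order expansion for tightness; your expansion at general $m_c$, $\mathrm{JSD}_c\approx\delta^2/(8m_c(1-m_c))$, is slightly more general than the paper's symmetric choice $p_c=\tfrac12+\varepsilon$, $q_c=\tfrac12-\varepsilon$, and also shows why the bound loosens away from $\tfrac12$.
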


\begin{proof}[Proof sketch]
Applying Pinsker's inequality to $\mathrm{KL}(p_c \| m_c)$ and $\mathrm{KL}(q_c \| m_c)$ separately, where $m_c = \frac{1}{2}(p_c + q_c)$, yields $(p_c - q_c)^2 \leq 2\,\mathrm{JSD}_c$ in nats. Taking square roots and averaging over $C$ classes via Jensen's inequality (concavity of $\sqrt{\cdot}$) gives the class-averaged bound. Full derivation and tightness verification are in Appendix~\ref{app:proof}.
\end{proof}

Lemma~\ref{lemma:bound} shows a large SpatialUQ score \emph{necessarily} implies a large mean absolute shift between global and local class probabilities. This concerns probability shift only; that MUS predicts failures is empirical: the Spearman correlation between $s_{\text{bern}}$ and per-image Brier score is positive and highly significant ($p < 0.0001$) across all medical benchmarks, which licenses MUS as a failure signal. The bound is conservative under heterogeneous per-class JSD via Jensen's inequality.

\section{Experimental Setup}\label{sec:experimental_setup}

\subsection{Datasets and Tasks}\label{sec:dataset_task}

We evaluate SpatialUQ across twelve configurations spanning medical and natural image classification and object detection. 
\textbf{NIH ChestX-ray14}~\citep{wang2017chestx} (112,120 frontal chest X-rays, 14 multi-label diseases; patient-wise split \textbf{73,891/12,633/25,596} train/val/test; class-balanced BCE for $5.3\times$--$641.5\times$ imbalance) trains three architectures independently: DenseNet-121~\citep{huang2017densely}, EfficientNet-B4~\citep{tan2019efficientnet}, and ViT-B/16~\citep{dosovitskiy2020image}. CLIP ViT-B/32 and BiomedCLIP are evaluated on the same NIH test set via linear-probe transfer (frozen backbones). \textbf{CheXpert}~\citep{irvin2019chexpert} and \textbf{VinBigData}~\citep{nguyen2022vindr} serve as zero-shot transfer benchmarks for the NIH-trained DenseNet-121 (10 NIH-shared classes each; Appendix~\ref{app:label_mapping}; U-Zeros for uncertain CheXpert labels).
\textbf{ImageNet-1k} evaluates three pretrained models (EfficientNet-B4, ViT-B/16, ConvNeXt-Tiny~\citep{liu2022convnet}) zero-shot on 50,000 validation images with categorical JSD; failure is top-1 misclassification.
\textbf{MS COCO 2014}~\citep{lin2014microsoft} evaluates Faster R-CNN~\citep{ren2015faster} and RetinaNet~\citep{lin2017focal} (ResNet-50-FPN) on a 5,000-image subset (seed 42), with per-class confidence as the maximum detection score (threshold 0.1; 0.05/0.30 confirm stability) and Bernoulli JSD over 80-dimensional vectors.

\subsection{Failure Definition}\label{sec:failure_defination}
A prediction is labeled a \textit{failure} if its per-image Brier score exceeds the 75th percentile of the test-set Brier distribution, yielding a consistent 25\% failure rate across all datasets and architectures for direct cross-domain comparison without dataset-specific tuning (robustness to the 50th, 75th, and 90th percentile thresholds is confirmed in Appendix~\ref{app:failure_threshold_robustness}. Critically, the Brier score is computed against ground-truth labels and is epistemically independent of MUS, which operates solely on output probabilities.

\subsection{Training Protocol}\label{sec:training_protocol}
NIH architectures are fine-tuned from ImageNet-pretrained weights in two stages: (i) 5 epochs with frozen backbone (lr $3\times10^{-4}$, AdamW, weight decay $10^{-4}$); (ii) up to 25 epochs (patience 7) unfrozen (backbone lr $5\times10^{-6}$, head lr $5\times10^{-5}$, cosine annealing to $10^{-7}$). Loss is BCE with label smoothing ($\varepsilon=0.1$) and class-balanced \texttt{pos\_weight}, augmentation is random rotation ($\pm10^\circ$) and crop to $224\times224$ (no horizontal flip, preserving laterality), batch size 32. ImageNet and COCO models are evaluated zero-shot full details in Appendix~\ref{app:implementation}.

\subsection{Baselines}\label{sec:baseline}
We compare against 17 uncertainty methods from the same frozen model without retraining. \textbf{Output-space}: predictive entropy~\citep{shannon1948mathematical}, inverted confidence (raw and temperature-scaled), $\ell_1$ distance, and maximum inter-crop disagreement. \textbf{Crop-based}: patch variance, pairwise disagreement, and pairwise JSD. For reproducibility: patch variance is the variance of per-class probabilities across the five crops, averaged over classes, pairwise disagreement is the mean absolute difference between all pairs of crop predictions, pairwise JSD is the mean JSD between all pairs of crop predictions. \textbf{Augmentation-based}: TTA variance, TTA-JSD (10 augmentations), and MC-Dropout~\citep{gal2016dropout} (30 passes). \textbf{Feature/logit-based}: DDU~\citep{mukhoti2021deterministic}, Mahalanobis distance~\citep{lee2018simple}, and ODIN~\citep{liang2017enhancing}. \textbf{Ensemble}: mean prediction entropy, ensemble variance, and temperature-scaled entropy~\citep{lakshminarayanan2017simple} from five DenseNet-121 models. Hyperparameters follow respective publications (Appendix~\ref{app:implementation}).

\subsection{Evaluation Protocol}\label{sec:evaluation_protocol}
The primary metric is AUC for binary failure detection, we additionally report AUPR, AURC, E-AURC, and FPR at 80\% and 95\% TPR. We adopt the AURC formulation of~\citet{nado2021uncertainty} over AUGRC~\citep{zhou2024novel}, as the fixed 25\% failure rate renders the two equivalent. Calibration uses Score Calibration Error (SCE; ECE with 15 bins on min-max normalized scores). All metrics include 95\% bootstrap CIs (1,000 resamples), pairwise AUC significance uses DeLong tests and two-tailed bootstrap $p$-values (10,000 resamples). All experiments use seed 42, code and data splits are included in the supplementary material and will be released publicly upon acceptance.

\section{Results}\label{sec:results}

\subsection{NIH ChestX-ray14: In-Distribution Failure Detection}\label{sec:NIH_result}

Table~\ref{tab:nih_main} reports failure detection on NIH ChestX-ray14 ($N{=}25{,}596$) using DenseNet-121. MUS reaches an AUC of $0.784$ (95\%~CI~$[0.778, 0.789]$) with six deterministic forward passes, outperforming MC-Dropout ($0.664$, $\Delta{=}{+}0.119$, $p{<}10^{-6}$; 30 stochastic passes) while reducing wall-clock time by $5\times$ ($\approx$8 vs.\ $\approx$39 min, Appendix~\ref{app:implementation}). Although $\ell_1$ achieves a slightly higher AUC ($0.790$ vs.\ $0.784$, $\Delta{=}0.006$, $p{<}10^{-6}$), its calibration is substantially worse (SCE $0.127$ vs.\ $0.049$). Naive crop variance performs much worse ($0.610$ AUC), showing that the way crop predictions are aggregated is important. Confidence and entropy are close to chance ($0.401$ and $0.177$), reflecting systematic BCE overconfidence from label smoothing and class-balanced loss; temperature scaling does not recover the signal ($0.400$).

This advantage is specific to the overconfidence regime encountered here. On well-calibrated ImageNet classifiers, confidence alone reaches $0.913$--$0.923$ AUC, compared with $0.641$--$0.717$ for MUS, although MUS still provides complementary information in fusion (Section~\ref{sec:natural_image}). Combining MUS, entropy, confidence, and $\ell_1$ through 5-fold cross-validation on the validation set increases AUC to $0.832$, outperforming a five-member deep ensemble ($0.813$, $\Delta{=}{+}0.019$, $p{<}10^{-6}$) at a fraction of the training cost. MUS also tracks model error monotonically ($\rho{=}0.523$, $p{<}10^{-6}$), and the same behavior extends to EfficientNet-B4 and ViT-B/16 (Appendix~\ref{app:nih_arch}). Finally, spatial disagreement heatmaps show that high-MUS failures concentrate crop-level JSD in ambiguous regions (Appendix~\ref{app:sota_analysis}).

\begin{figure}[!ht]
    \centering
        \includegraphics[width=\linewidth, height=0.35\textheight, keepaspectratio]{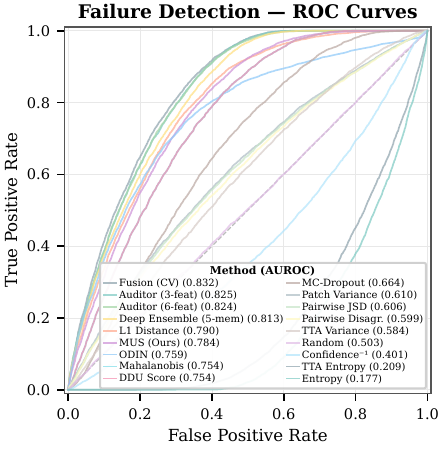}
        \captionof{figure}{Failure detection ROC curve for the NIH test 
        set, comparing SpatialUQ (MUS) against sampling-based and 
        feature-based baselines.}
        \label{fig:roc_failure_detection}
\end{figure}

\begin{table}[tb]
    \centering
    \caption{Failure detection on NIH ChestX-ray14
    ($N{=}25{,}596$, failure: Brier $>$ 75th percentile).
    $\dagger$ requires $>$6 passes. $\ddagger$ requires training
    features. Full results in Table~\ref{tab:nih_densenet_full}.}
    \label{tab:nih_main}
    \vspace{2pt}
    \footnotesize
    \addtolength{\tabcolsep}{-3pt}
    \begin{tabular}{lccc}
        \toprule
        \textbf{Method} & \textbf{AUC} & \textbf{95\% CI} & \textbf{SCE} \\
        \midrule
        \textbf{Fusion (CV)}   & \textbf{0.832} & \textbf{[0.827, 0.837]} & \textbf{0.246} \\
        Deep Ensemble$\dagger$       & 0.813          & [0.808, 0.819]          & 0.138          \\
        $\ell_1$ Distance            & 0.790          & [0.785, 0.796]          & 0.127          \\
        \textbf{MUS (Unsupervised)}  & \textbf{0.784} & \textbf{[0.778, 0.789]} & \textbf{0.049} \\
        Deep Ens + T-scaling$\dagger$& 0.774          & [0.768, 0.780]          & 0.090          \\
        ODIN                         & 0.759          & [0.752, 0.765]          & 0.484          \\
        TTA-JSD (Photometric)        & 0.755          & [0.750, 0.761]          & 0.043          \\
        DDU Score$\ddagger$          & 0.754          & [0.748, 0.760]          & 0.139          \\
        Mahalanobis$\ddagger$        & 0.754          & [0.748, 0.760]          & 0.283          \\
        Max Disagreement             & 0.744          & [0.738, 0.750]          & 0.186          \\
        MC-Dropout$\dagger$          & 0.664          & [0.658, 0.671]          & 0.121          \\
        Ensemble Variance            & 0.658          & [0.651, 0.665]          & 0.042          \\
        TTA (5 crops, variance)      & 0.610          & [0.602, 0.618]          & 0.029          \\
        TTA Variance (N=10)$\dagger$ & 0.584          & [0.576, 0.591]          & 0.089          \\
        T-Scaled Conf$^{-1}$         & 0.400          & [0.391, 0.408]          & 0.002          \\
        Confidence$^{-1}$            & 0.401          & [0.392, 0.409]          & 0.422          \\
        \bottomrule
    \end{tabular}
\end{table}

Figure~\ref{fig:roc_failure_detection} shows that MUS achieves $0.784$ AUC with six deterministic passes, outperforming MC-Dropout ($0.664$), crop-variance TTA ($0.610$), and all other TTA variants ($\leq 0.584$) without ensembles, stochastic inference, or auxiliary supervision. Fusion further improves performance to $0.832$ AUC by combining MUS with complementary signals.

\subsection{Extension to Frozen Foundation Models}\label{sec:foundation_models}

We apply the NIH protocol to frozen CLIP ViT-B/32~\citep{radford2021learning} and BiomedCLIP~\citep{zhang2023biomedclip} via linear-probe transfer. Confidence$^{-1}$ collapses on uncalibrated medical outputs ($0.095$/$0.206$ AUC), while MUS remains strong at $0.825$ ($\rho{=}0.631$) and $0.899$ ($\rho{=}0.846$, $p{<}10^{-6}$), respectively. Although $\ell_1$ has slightly higher AUC, MUS is better calibrated (SCE $0.133$ vs.\ $0.232$), and fusion reaches $0.883$/$0.925$. SpatialUQ therefore transfers across general and domain-specific foundation models without architectural access (Appendix~\ref{app:foundation_models}).

\subsection{Calibration as a Deployment Signal}\label{sec:calibration}

Calibration determines whether a threshold is clinically meaningful, AUC measures only ranking. Figure~\ref{fig:calibration} shows MUS natively calibrated ($\text{SCE}{=}0.049$) versus raw $\ell_1$ ($0.127$). A labeled hold-out lets isotonic regression calibrate $\ell_1$ ($0.006$/$0.789$) and fusion ($0.004$/$0.830$), but such labels are unavailable for frozen black-box deployment, where MUS remains the only natively calibrated discriminative option for deployment. 

\begin{figure}[!ht]
    \centering
    \includegraphics[width=\linewidth]{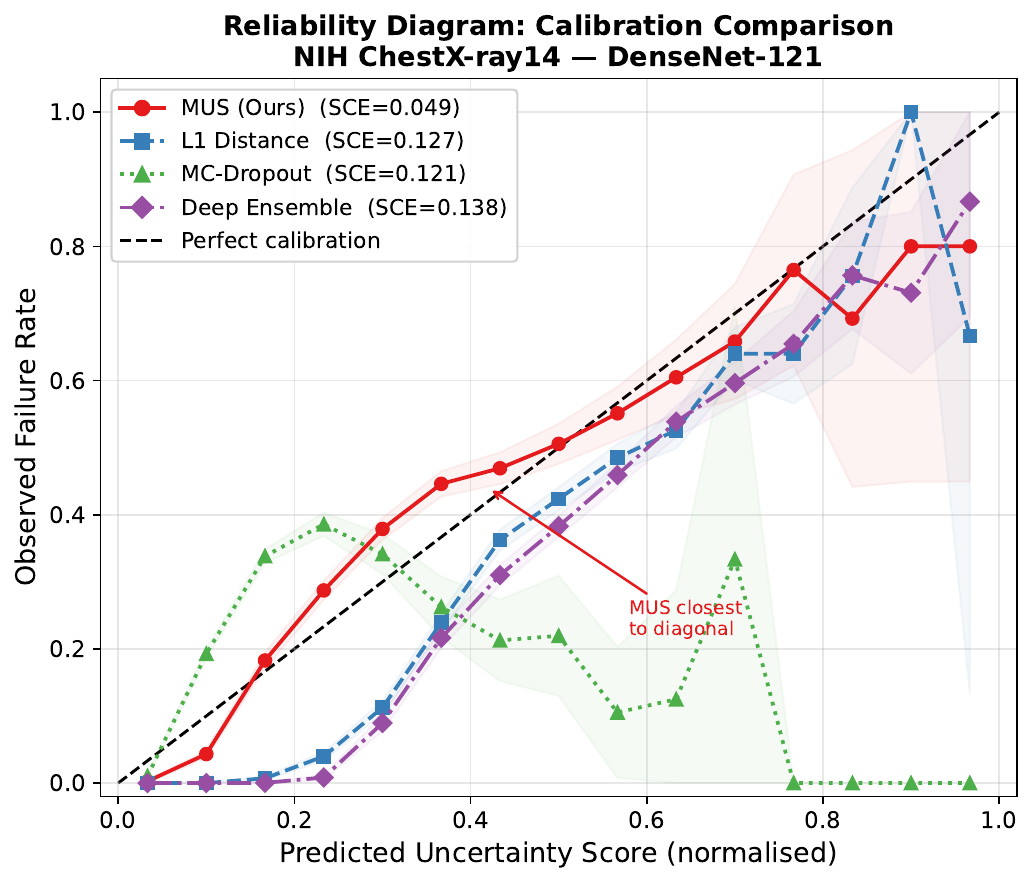}
    \caption{Reliability diagram comparing MUS against baseline uncertainty methods on NIH ChestX-ray14 (DenseNet-121).}
    \label{fig:calibration}
\end{figure}

\subsection{Ablation: Fusion Components and Spatial Design}\label{sec:ablation}

Table~\ref{tab:ablation} shows entropy is the most critical fusion component (removal: $-0.044$ AUC). MUS and $\ell_1$ add complementary gains ($-0.005$, $-0.007$), confidence negligibly ($-0.001$). Yet entropy alone reaches only $0.177$ AUC under BCE overconfidence while MUS reaches $0.784$ alone, so fusion gains come from orthogonal failure modes. Five fixed crops are optimal (Appendix~\ref{app:ablation}), nine crops gain only $+0.007$ AUC at $67\%$ additional cost, three crops lose $0.007$, and $1{,}000$ random placements reach $0.739$ (Appendix~\ref{appendix:random_crops}). These results confirm the fixed geometric arrangement provides superior spatial coverage (Appendix~\ref{app:crop_count_geometry}).

\begin{table}[!ht]
    \centering
    \small
    \caption{Leave-one-out ablation of the four-feature fusion (NIH DenseNet-121). Full fusion AUC $= 0.832$. $\Delta$: AUC drop when each feature is removed.}
    \label{tab:ablation}
    \addtolength{\tabcolsep}{-2pt}
    \begin{tabular}{lcc}
        \toprule
        \textbf{Removed Feature} & \textbf{AUC} & $\boldsymbol{\Delta}$ \\
        \midrule
        Entropy           & 0.788 & $-$0.044 \\
        $\ell_1$ Distance & 0.825 & $-$0.007 \\
        MUS               & 0.827 & $-$0.005 \\
        Confidence        & 0.831 & $-$0.001 \\
        \bottomrule
    \end{tabular}
\end{table}

\subsubsection{Spatial vs.\ Photometric Perturbation}\label{sec:photometric_perturbation}

The identical Bernoulli JSD on five photometric augmentations (brightness $\pm0.15$, contrast $\times1.2$/$\times0.8$, noise $\sigma{=}0.05$) yields $0.755$ AUC ($\rho{=}0.493$) versus MUS's $0.784$ ($\rho{=}0.523$) at the same six passes, with marginally better calibration (SCE $0.043$ vs.\ $0.049$). The AUC advantage confirms the gain comes from the perturbation's geometric structure, not the divergence measure.

\subsection{Zero-Shot Generalization to External Medical Datasets}\label{sec:zero_shot}

The NIH-trained DenseNet-121 transfers without retraining to CheXpert ($N{=}44{,}399$) and VinBigData ($N{=}15{,}000$); results are summarized in Table~\ref{tab:zeroshot}. On CheXpert, MUS achieves $0.708$ AUC versus $0.599$ for MC-Dropout ($\Delta{=}+0.109$, $p{<}10^{-6}$), with $\rho$ declining from $0.523$ to $0.298$ but retaining useful discrimination. At 20\% referral, selective prediction improves system AUC from $0.686$ to $0.748$ $+19.7$\% relative gap reduction. Under severe shift on VinBigData, however, MC-Dropout outperforms MUS ($0.764$ vs.\ $0.614$, $\Delta{=}-0.150$, $p{<}10^{-6}$), while $\rho$ collapses to $0.027$ as failures become spatially uniform and MUS loses its inconsistency signal.

\begin{table}[!ht]
\caption{Zero-shot failure detection (NIH-trained DenseNet-121, no retraining). Spearman $\rho$ = correlation between MUS and per-image Brier score. Full per-method tables in Appendix~\ref{app:zeroshot_full}.}
\label{tab:zeroshot}
\centering\small
\setlength{\tabcolsep}{3pt}
\begin{tabular}{@{}p{1.75cm} p{1.1cm} p{1.2cm} p{1.4cm} p{1.8cm}@{}}
\toprule
\textbf{Dataset} & \textbf{MUS AUC} & \textbf{Fusion AUC} & \textbf{MC-Dropout AUC} & \textbf{Spearman} $\rho$ \\
\midrule
NIH (in-dist.)         & \textbf{0.784} & \textbf{0.832} & 0.664 & 0.523 \\
CheXpert (zero-shot)   & \textbf{0.708} & \textbf{0.733} & 0.599 & 0.298 \\
VinBigData (zero-shot) & \textbf{0.614} & \textbf{0.750 }& 0.764 & 0.027 \\
\bottomrule
\end{tabular}
\end{table}

\begin{table*}[!t]
\caption{Failure detection AUC on ImageNet-1k and MS COCO 2014. 
Full secondary metrics in Appendix~\ref{app:full_metrics}.}
\label{tab:imagenet_coco}
\centering\small
\resizebox{\textwidth}{!}{
\begin{tabular}{llccccc c}
    \toprule
    \textbf{Dataset} & \textbf{Model} & \textbf{Top-1 Acc.} 
    & \textbf{MUS} & \textbf{Confidence$^{-1}$} 
    & \textbf{Entropy} & \textbf{Fusion} 
    & \textbf{Spearman} $\rho$ \\
    \midrule
    ImageNet & EfficientNet-B4 & 79.27\% & 0.717 & 0.913 
             & 0.896 & 0.917 & 0.392 \\
    ImageNet & ViT-B/16        & 81.07\% & 0.710 & 0.923 
             & 0.884 & 0.937 & 0.341 \\
    ImageNet & ConvNeXt-Tiny   & 82.13\% & 0.641 & 0.913 
             & 0.819 & 0.936 & 0.139 \\
    \midrule
    COCO     & Faster R-CNN    & -     & 0.800 & 0.602 
             & 0.884 & 0.885 & 0.613 \\
    COCO     & RetinaNet       & -     & 0.747 & 0.702 
             & 0.912 & 0.913 & 0.510 \\
    \bottomrule
\end{tabular}}
\end{table*}

\subsection{Natural Image Classification and Object Detection}\label{sec:natural_image}

Table~\ref{tab:imagenet_coco} evaluates ImageNet-1k (three architectures, 50,000 images) and MS COCO 2014 (two detectors, 5,000 images). On well-calibrated single-label classifiers, confidence already provides a strong correctness signal, making MUS complementary rather than a replacement (Section~\ref{sec:discussion}). On ImageNet, MUS alone trails Confidence$^{-1}$ ($0.641$--$0.717$ vs.\ $0.913$--$0.923$ AUC), but fusion reaches $0.917$--$0.937$, with MUS consistently contributing in leave-one-out ablations (Appendix~\ref{app:full_metrics}). This suggests that spatial consistency provides information beyond confidence even when confidence is already highly predictive. On COCO, MUS is stronger as a standalone signal ($0.800$/$0.747$ AUC; $\rho{=}0.613$/$0.510$), approaching the NIH in-distribution correlation and benefiting from the richer spatial structure of multi-label detection. Entropy remains the strongest standalone signal ($0.884$/$0.912$), while fusion reaches $0.885$/$0.913$.

\subsection{Clinical Triage Simulation}\label{sec:triage_simulations}

\paragraph{Radiologist review simulation.} We simulate triage on the NIH test set (DenseNet-121). Referring the 9.2\% highest-MUS images to human review recovers 20\% of all Brier-defined failures at 54.5\% referral precision. The threshold is tunable: precision is 58.7\% at 10\% recall (4.3\% referred), 54.5\% at 20\%, 51.4\% at 30\% (Appendix~\ref{app:clinical}). Rejecting the 20\% most uncertain images raises mean test AUC $0.808{\to}0.814$ ($\Delta{=}{+}0.006$).

\begin{figure}[!ht]
    \centering
    \includegraphics[width=0.92\linewidth]{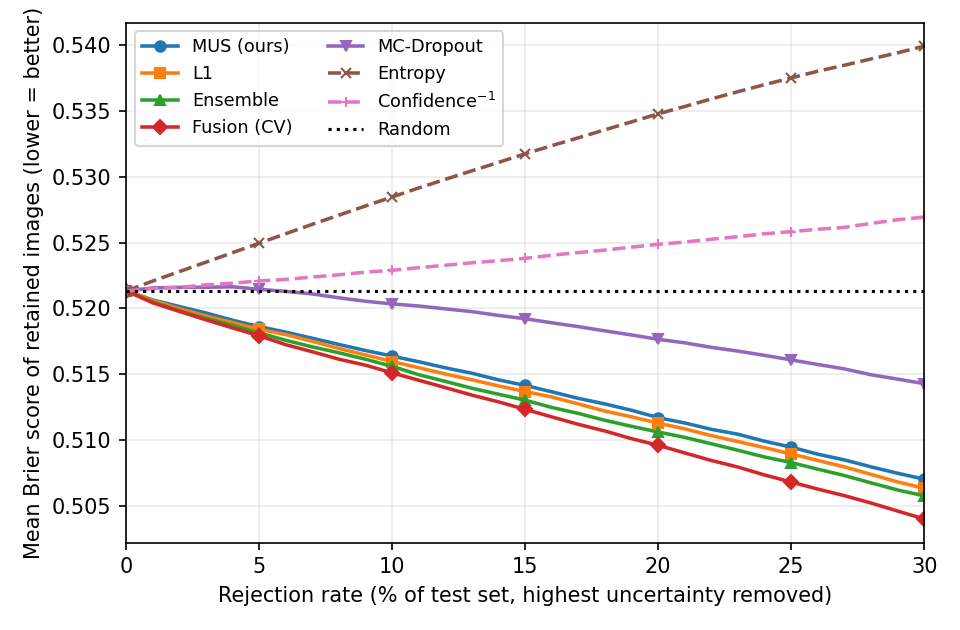}
    \caption{Mean per-image Brier score of the retained set (lower is better) versus rejection rate (NIH DenseNet-121, $N{=}25{,}596$), rejecting images in descending uncertainty order.}
    \label{fig:rejection_comparison}
\end{figure}

Figure~\ref{fig:rejection_comparison} compares rejection quality across all methods against a random baseline. MUS monotonically lowers retained Brier ($0.521{\to}0.507$ at 30\%) and beats MC-Dropout throughout. Fusion (CV) leads ($0.504$), with $\ell_1$ and ensemble ($0.506$) marginally ahead of MUS. MUS remains the best-calibrated unsupervised score above 0.78 AUC (SCE $0.049$ vs.\ $\ell_1$'s $0.127$). Entropy and confidence degrade the retained set, as expected from their near-chance AUC, since Brier defines the failure label, this restates the AUC ranking.

\begin{table*}[!t]
\caption{Failure detection on NIH ChestX-ray14 (DenseNet-121, $N{=}25{,}596$). $^\dagger$Post-hoc approximation.}
\label{tab:sota}
\centering\footnotesize
\setlength{\tabcolsep}{4pt}\renewcommand{\arraystretch}{0.95}
\begin{tabular}{@{}p{6.8cm} p{1.25cm} p{3cm} p{1.2cm} p{1cm} p{1cm} p{1cm} @{}}
\toprule
\textbf{Method} & \textbf{AUC} & \textbf{Passes} & \textbf{Internals} & \textbf{Train data} & \textbf{Retrain} & \textbf{Frozen ckpt?} \\
\midrule
\textbf{Fusion (Ours) }                      & \textbf{0.832} & \textbf{6 + logistic reg}   & \xmark & \xmark & \xmark & \cmark \\
Deep Ensemble~\citep{lakshminarayanan2017simple} & 0.813      & $5\times$model     & \xmark & \xmark & \cmark & \xmark \\
\textbf{MUS (Ours)}                        & \textbf{0.784} & \textbf{6}         & \xmark & \xmark & \xmark & \cmark \\
Deep Ensemble + T-scaling                  & 0.774          & $5\times$model+cal & \xmark & \cmark & \cmark & \xmark \\
ODIN~\citep{liang2017enhancing}              & 0.759          & 1+grad             & \cmark & \xmark & \xmark & \cmark \\
DDU$^\dagger$~\citep{mukhoti2021deterministic} & 0.754       & 1                  & \cmark & \cmark & \xmark & \xmark \\
Mahalanobis$^\dagger$~\citep{lee2018simple} & 0.754          & 1                  & \cmark & \cmark & \xmark & \xmark \\
MC-Dropout~\citep{gal2016dropout}           & 0.664          & 30                 & \xmark & \xmark & \cmark & \xmark \\
\bottomrule
\end{tabular}
\par\vspace{2pt}
{\scriptsize \textbf{Passes}: forward passes per image. \textbf{Internals}: needs gradients/activations. \textbf{Train data}: needs training-set features. \textbf{Retrain}: needs model modification. \textbf{Frozen ckpt?}: applicable to any pretrained checkpoint.\par}
\end{table*}

\paragraph{Per-class analysis.} MUS discrimination varies strongly across pathologies: it performs best on diffuse, globally visible conditions such as Pneumonia (JSD AUC $0.979$) and Edema ($0.971$), but poorly on small focal lesions such as Nodule ($0.368$), Fibrosis ($0.553$), and Mass ($0.558$), reflecting a limitation of the five-crop design. However a 16-crop extension  raises Nodule AUC to $0.589$ Appendix~\ref{app:multiscale}). Prevalence-weighting across the 14 classes yields an overall AUC of $0.83$, but masks this weakness because the three hardest classes account for only 14\% of positive findings. A focal-lesion-enriched benchmark would provide a more stringent test of MUS.

\begin{figure}[!ht]
    \centering
    \includegraphics[width=0.92\linewidth]{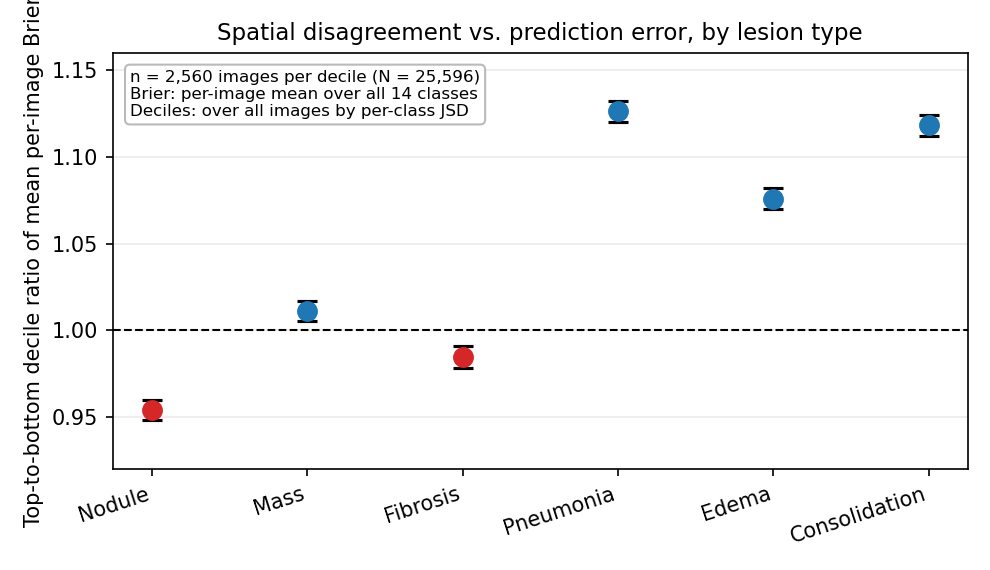}
    \caption{Top-to-bottom decile ratio of mean per-image Brier score by lesion type (NIH DenseNet-121, $N{=}25{,}596$). Error bars: 95\% bootstrap CIs; n = 2,560 images per decile.}
    \label{fig:nodule_mechanism}
\end{figure}

Figure~\ref{fig:nodule_mechanism} shows the top-to-bottom decile Brier ratio (95\% CIs): $1.08$--$1.13$ for diffuse classes but $0.95$ [0.948, 0.960] for Nodule, indicating that high nodule disagreement concentrates on better-predicted images. Mass ($1.01$) and Fibrosis ($0.99$) show no meaningful coupling. Thus, disagreement does not track error for focal lesions, although this does not imply per-crop correctness. The limitation concerns the score rather than classifier quality: raw Nodule AUC ($0.740$) is near the 14-class mean ($0.7885$, Appendix~\ref{app:backbone}). Full curves, referral statistics, and fairness analyzes are in Appendix~\ref{app:clinical} where gender parity is near-perfect ($0.784$ vs.\ $0.784$).

\section{Discussion}\label{sec:discussion}
A practitioner summary of when to use MUS is provided in Appendix~\ref{app:whentouse} (Table~\ref{tab:whentouse}).

\paragraph{Complementarity and Regime Dependence.}
MUS is most useful in overconfident multi-label settings where confidence is weak (NIH DenseNet-121: AUC $0.401$). On well-calibrated single-label tasks such as ImageNet, confidence dominates and MUS is complementary, with fusion combining the two signals.

\paragraph{Uncertainty type and OOD positioning.}
MUS measures whether model confidence is spatially reproducible, which we view as an epistemic signal reflecting learned spatial grounding rather than irreducible label noise. Its correlation with model quality also provides a pre-deployment diagnostic: $\rho\lesssim0.15$ flags severe shift, as observed on VinBigData ($\rho=0.027$), before full test-set evaluation.

\paragraph{Why MUS over $\ell_1$?}
Although $\ell_1$ slightly exceeds MUS in AUC ($0.790$ vs.\ $0.784$, $\Delta=0.006$; $p<10^{-6}$), its SCE is substantially worse ($0.127$ vs.\ $0.049$). Under shift, MUS also performs better on VinBigData ($0.614$ vs.\ $0.583$), consistent with the robustness of bounded, symmetric JSD. Lemma~\ref{lemma:bound} establishes the connection between MUS and $\ell_1$, but the empirical advantage of MUS lies in its native calibration and shift robustness. Isotonic regression can calibrate $\ell_1$ (SCE $0.006$), but requires labeled hold-out data unavailable in frozen deployment. Fusion's higher SCE ($0.246$) is instead driven by the uncalibrated entropy component.

\paragraph{Limitations.}
SpatialUQ has five main limitations. (i) Severe domain shift can collapse the inconsistency signal (VinBigData, $\rho=0.027$). (ii) Five fixed crops limit sensitivity to small focal lesions (Nodule JSD AUC $0.368$; 16-crop extension: $0.589$). (iii) ViT global attention compresses MUS's dynamic range, with masking recovering $+0.004$ AUC (Appendix~\ref{app:vit}). (iv) Fusion requires a labeled hold-out set. (v) Generative VLMs with global pooling can weaken the signal (Appendix~\ref{app:biovilt}). The $\rho\lesssim0.15$ diagnostic flags the observed failure regimes while remaining positive for discriminative settings (NIH $0.523$, CheXpert $0.298$); when flagged, MC-Dropout or ensembles are preferable, with MUS retained as a fusion component. Thus, the primary operating regime is low-cost failure detection for overconfident multi-label classifiers under moderate shift when retraining is infeasible. Generalization beyond radiography and RGB natural images remains untested.

\paragraph{Comparison with State-of-the-Art.}
Table~\ref{tab:sota} compares SpatialUQ with established UQ methods on NIH DenseNet-121. MUS achieves $0.784$ AUC fully post-hoc, without retraining, training data, or architectural modifications, while competitive baselines require training-time structure (MC-Dropout, ensembles) or dataset-dependent statistics (DDU, Mahalanobis). Extended analysis appears in Appendix~\ref{app:sota_analysis}. Temperature scaling improves ensemble SCE from $0.138$ to $0.090$ but reduces AUC from $0.813$ to $0.774$, illustrating the trade-off between calibration and discrimination. Fusion reaches $0.832$ AUC by adding the spatially grounded MUS signal, whose native SCE is $0.049$.

\section{Final Remarks}\label{sec:final_remarks}

SpatialUQ provides a practical operating point for inference-time failure detection: six deterministic forward passes using output probabilities alone, with no access to model internals, labels or retraining. Spatial inconsistency provides a geometrically grounded signal that scales with model quality, from $0.784$ AUC on DenseNet-121 to $0.899$ on BiomedCLIP ($\rho=0.846$). The $\rho\lesssim0.15$ diagnostic further provides a simple pre-deployment check for the observed distribution-shift failure regimes. Future work includes saliency-guided crop placement, aggregation adapted to global-attention architectures, and broader validation on MIMIC-CXR with temporal shift, population-level AURC, and intersectional subgroup analyzes. Generalization to CT, MRI, and ultrasound also remains to be established.

\subsection*{AI use statement}

In this work, we used generative AI tools for writing assistance, including drafting, editing, and formatting of manuscript text, and for verifying the consistency of reported numbers against the experimental code. We have not used generative AI tools for generating experimental results, fabricating data, or conducting literature surveys, and all AI-assisted content was reviewed by the authors. We take responsibility for the final content of this work, including text, claims, or artifacts produced with the aid of generative AI.

\bibliographystyle{plainnat} 
\bibliography{references}

%%%%%%%%%%%%%%%%%%%%%%%%%%%%%%%%%%%%%%%%%%%%%%%%%%%%%%%%%%%%
\section*{Checklist}

\begin{enumerate}

  \item For all models and algorithms presented, check if you include:
  \begin{enumerate}
    \item A clear description of the mathematical setting, assumptions, algorithm, and/or model. [Yes] The spatial decomposition, Bernoulli/categorical JSD scores, and the logistic fusion are defined in Section~\ref{sec:our_approach}, with the full procedure in Algorithm~\ref{alg:spatialuq}.
    \item An analysis of the properties and complexity (time, space, sample size) of any algorithm. [Yes] SpatialUQ needs exactly six forward passes per image with no stored state (Section~\ref{sec:our_approach}); wall-clock timings are in Appendix~\ref{app:implementation}. Lemma~\ref{lemma:bound} gives the information-theoretic property of the score.
    \item (Optional) Anonymized Source code, with specification of all dependencies, including external libraries. [Yes] Code and data splits are included in the supplementary material, with dependencies listed in the repository requirements file.
  \end{enumerate}

  \item For any theoretical claim, check if you include:
  \begin{enumerate}
    \item Statements of the full set of assumptions of all theoretical results. [Yes] Lemma~\ref{lemma:bound} states its assumptions ($p_c, q_c \in [0,1]$, JSD in nats) directly.
    \item Complete proofs of all theoretical results. [Yes] The full proof is in Appendix~\ref{app:proof}, with a proof sketch in Section~\ref{sec:theory}.
    \item Clear explanations of any assumptions. [Yes] Section~\ref{sec:theory} explains that the bound concerns probability shift only and that the link to failure detection is empirical (Spearman correlation).
  \end{enumerate}

  \item For all figures and tables that present empirical results, check if you include:
  \begin{enumerate}
    \item The code, data, and instructions needed to reproduce the main experimental results (either in the supplemental material or as a URL). [Yes] Code, data splits, and fixed random seeds (seed 42) are in the supplementary material.
    \item All the training details (e.g., data splits, hyperparameters, how they were chosen). [Yes] Section~\ref{sec:training_protocol} and Appendix~\ref{app:implementation} give the two-stage protocol, learning rates, augmentation, and the 73,891/12,633/25,596 patient-wise split.
    \item A clear definition of the specific measure or statistics and error bars (e.g., with respect to the random seed after running experiments multiple times). [Yes] Failure is defined as Brier score above the 75th percentile (Section~\ref{sec:failure_defination}); all AUCs carry 95\% bootstrap CIs (1,000 resamples) and pairwise significance uses DeLong tests with 10,000-resample bootstrap $p$-values.
    \item A description of the computing infrastructure used. (e.g., type of GPUs, internal cluster, or cloud provider). [Yes] All NIH experiments ran on a single NVIDIA Tesla T4 GPU (Appendix~\ref{app:implementation}).
  \end{enumerate}

  \item If you are using existing assets (e.g., code, data, models) or curating/releasing new assets, check if you include:
  \begin{enumerate}
    \item Citations of the creator If your work uses existing assets. [Yes] NIH ChestX-ray14~\citep{wang2017chestx}, CheXpert~\citep{irvin2019chexpert}, VinBigData~\citep{nguyen2022vindr}, MS COCO~\citep{lin2014microsoft}, CLIP~\citep{radford2021learning}, and BiomedCLIP~\citep{zhang2023biomedclip} are cited, as are the DenseNet~\citep{huang2017densely}, EfficientNet~\citep{tan2019efficientnet}, and ViT~\citep{dosovitskiy2020image} backbones.
    \item The license information of the assets, if applicable. [Yes] All datasets and pretrained models are publicly released research assets used under their published terms.
    \item New assets either in the supplemental material or as a URL, if applicable. [Yes] The SpatialUQ implementation notebooks are available at this link \url{https://huggingface.co/datasets/kawsher11/SpatialUQ}.

    \item Information about consent from data providers/curators. [Not Applicable] All datasets are pre-existing public benchmarks; no new data was collected.
    \item Discussion of sensible content if applicable, e.g., personally identifiable information or offensive content. [Not Applicable] The chest X-ray benchmarks are de-identified; the fairness audit in Appendix~\ref{app:clinical} uses only aggregate age/gender metadata.
  \end{enumerate}

  \item If you used crowdsourcing or conducted research with human subjects, check if you include:
  \begin{enumerate}
    \item The full text of instructions given to participants and screenshots. [Not Applicable]
    \item Descriptions of potential participant risks, with links to Institutional Review Board (IRB) approvals if applicable. [Not Applicable]
    \item The estimated hourly wage paid to participants and the total amount spent on participant compensation. [Not Applicable]
  \end{enumerate}
  No crowdsourcing or human-subjects research was conducted; the triage study in Section~\ref{sec:triage_simulations} is a retrospective simulation on de-identified benchmark data.

\end{enumerate}

\clearpage
\appendix
\thispagestyle{empty}
\onecolumn
\aistatstitle{SpatialUQ: Post-Hoc Uncertainty Quantification from Spatial Consistency in Black-Box Vision Models:\\Supplementary Materials}

\section{Proof of Lemma~\ref{lemma:bound}}\label{app:proof}

We prove the spatial disagreement bound in two steps. All logarithms are natural (JSD in nats) unless stated otherwise.

\paragraph{Step 1: Per-class bound via Pinsker's inequality.}
For Bernoulli distributions with parameters $p_c, q_c \in [0,1]$, let 
$m_c = \frac{1}{2}(p_c + q_c)$. The JSD decomposes as:
\begin{equation}
    \mathrm{JSD}_c(p_c \| q_c) = 
    \frac{1}{2}\,\mathrm{KL}(p_c \| m_c) + \frac{1}{2}\,\mathrm{KL}(q_c \| m_c)
\end{equation}
For Bernoulli distributions, the total variation distance between 
parameters $p_c$ and $m_c$ reduces to $\mathrm{TV}(p_c, m_c) = 
|p_c - m_c| = \frac{1}{2}|p_c - q_c|$. Pinsker's inequality 
$\mathrm{TV}(P,Q)^2 \leq \frac{1}{2}\mathrm{KL}(P\|Q)$ then gives:
\begin{equation}
    \frac{1}{4}(p_c - q_c)^2 \leq \frac{1}{2}\,\mathrm{KL}(p_c \| m_c)
    \quad\Longrightarrow\quad
    (p_c - q_c)^2 \leq 2\,\mathrm{KL}(p_c \| m_c)
\end{equation}
Applying the same argument symmetrically to $\mathrm{KL}(q_c \| m_c)$:
\begin{equation}
    (p_c - q_c)^2 \leq 2\,\mathrm{KL}(q_c \| m_c)
\end{equation}

Adding the two inequalities above:
\begin{equation}
    2(p_c - q_c)^2 \leq 2\bigl(\mathrm{KL}(p_c \| m_c) + 
    \mathrm{KL}(q_c \| m_c)\bigr) = 4\,\mathrm{JSD}_c(p_c \| q_c)
\end{equation}
Dividing by 2 and taking square roots:
\begin{equation}
    |p_c - q_c| \leq \sqrt{2\,\mathrm{JSD}_c(p_c \| q_c)}
    \qquad\text{(in nats)}
    \label{eq:perclass}
\end{equation}

\paragraph{Step 2: Aggregation over classes via Jensen's inequality.}
Averaging Eq.~(\ref{eq:perclass}) over all $C$ classes:
\begin{equation}
    \frac{1}{C}\sum_{c=1}^{C}|p_c - q_c| 
    \leq \frac{1}{C}\sum_{c=1}^{C}\sqrt{2\,\mathrm{JSD}_c}
\end{equation}
Since $\sqrt{\cdot}$ is concave, Jensen's inequality gives:
\begin{equation}
    \frac{1}{C}\sum_{c=1}^{C}\sqrt{2\,\mathrm{JSD}_c}
    \leq \sqrt{\frac{2}{C}\sum_{c=1}^{C}\mathrm{JSD}_c}
    = \sqrt{2\,s_{\text{bern}}(x)}
    \qquad\text{(in nats)}
\end{equation}
which is the complete bound:
\begin{equation}
    \frac{1}{C}\sum_{c=1}^{C}
    \bigl|p_{\text{global},c} - p_{\text{local},c}\bigr|
    \;\leq\; \sqrt{2 \cdot s_{\text{bern}}(x)}
    \qquad\blacksquare
\end{equation}

\paragraph{Verification of tightness.}
The per-class bound in Eq.~(\ref{eq:perclass}) is tight in the limit $p_c, q_c \to \frac{1}{2}$. Setting $p_c = \frac{1}{2} + \varepsilon$, $q_c = \frac{1}{2} - \varepsilon$ (so $m_c = \frac{1}{2}$), a second-order expansion gives $\mathrm{KL}(p_c \| m_c) = \mathrm{KL}(q_c \| m_c) = 2\varepsilon^2 + o(\varepsilon^2)$, hence $\mathrm{JSD}_c = 2\varepsilon^2 + o(\varepsilon^2)$ and
\begin{equation}
    \text{RHS} = \sqrt{2\,\mathrm{JSD}_c} = 2\varepsilon + o(\varepsilon) = |p_c - q_c| + o(\varepsilon) = \text{LHS} + o(\varepsilon),
\end{equation}
so the ratio of the two sides tends to $1$ as $\varepsilon \to 0$.

% ============================================================
% COMPLETE APPENDIX — SPATIALUQ RESULTS
% All tables referenced from main body Results section.
% ============================================================

% ============================================================
\section{Backbone Classification Performance}\label{app:backbone}
% ============================================================
Table~\ref{tab:backbone_densenet} reports per-class AUC-ROC, Brier score, and average precision for the DenseNet-121 backbone (7.0\,M parameters) trained on NIH ChestX-ray14, selected at epoch 28 (validation loss of 2.0564). The model achieves a mean AUC of 0.7885, with discriminative performance ranging from 0.6984 (Infiltration) to 0.8822 (Emphysema). Calibration and average precision exhibit pronounced class-level disparity, largely attributable to severe label imbalance. These results serve as the reference operating point for the subsequent failure-detection analysis.

\begin{table}[!ht]
\caption{NIH ChestX-ray14 backbone classification performance -
DenseNet-121 (7.0\,M parameters, best checkpoint epoch 28,
validation loss 2.0564). Mean AUC of 0.7885.}
\label{tab:backbone_densenet}
\centering\small
\begin{tabular}{lccc}
\toprule
\textbf{Class} & \textbf{AUC} & \textbf{Brier} & \textbf{AP} \\
\midrule
Atelectasis        & 0.7628 & 0.3189 & 0.3272 \\
Cardiomegaly       & 0.8746 & 0.5565 & 0.3350 \\
Effusion           & 0.8192 & 0.3081 & 0.4907 \\
Infiltration       & 0.6984 & 0.3161 & 0.3969 \\
Mass               & 0.7965 & 0.3576 & 0.2915 \\
Nodule             & 0.7400 & 0.3869 & 0.2191 \\
Pneumonia          & 0.7020 & 0.7346 & 0.0463 \\
Pneumothorax       & 0.8471 & 0.4472 & 0.3864 \\
Consolidation      & 0.7401 & 0.5084 & 0.1562 \\
Edema              & 0.8368 & 0.6476 & 0.1494 \\
Emphysema          & 0.8822 & 0.6111 & 0.3595 \\
Fibrosis           & 0.8095 & 0.6404 & 0.0888 \\
Pleural\_Thickening& 0.7569 & 0.5222 & 0.1245 \\
Hernia             & 0.7733 & 0.9428 & 0.0434 \\
\midrule
\textbf{Mean}      & \textbf{0.7885} & - & - \\
\bottomrule
\end{tabular}
\end{table}

Table~\ref{tab:backbone_effb4} reports per-class AUC-ROC, Brier score, and average precision for the EfficientNet-B4 backbone (17.6\,M parameters) on NIH ChestX-ray14, selected at epoch 29 (validation loss 2.0793). The model achieves a mean AUC of 0.7347, trailing DenseNet-121 by 0.054 points despite nearly $2.5\times$ more parameters, with per-class AUC ranging from 0.6589 (Pneumonia) to 0.8093 (Edema). Calibration and average precision mirror the class-imbalance patterns observed in the DenseNet-121 baseline. Both backbones are subsequently treated as frozen black-boxes for post-hoc uncertainty evaluation.

\begin{table}[!ht]
\caption{NIH ChestX-ray14 backbone classification performance -
EfficientNet-B4 (17.6\,M parameters, best checkpoint epoch 29,
validation loss 2.0793). Mean AUC = 0.7347.}
\label{tab:backbone_effb4}
\centering\small
\begin{tabular}{lccc}
\toprule
\textbf{Class} & \textbf{AUC} & \textbf{Brier} & \textbf{AP} \\
\midrule
Atelectasis        & 0.7192 & 0.3321 & 0.2732 \\
Cardiomegaly       & 0.7952 & 0.5746 & 0.2118 \\
Effusion           & 0.7740 & 0.3250 & 0.4189 \\
Infiltration       & 0.6790 & 0.3102 & 0.3733 \\
Mass               & 0.7002 & 0.4114 & 0.1591 \\
Nodule             & 0.6941 & 0.4039 & 0.1520 \\
Pneumonia          & 0.6589 & 0.7389 & 0.0375 \\
Pneumothorax       & 0.8050 & 0.4824 & 0.3031 \\
Consolidation      & 0.7022 & 0.5199 & 0.1299 \\
Edema              & 0.8093 & 0.6440 & 0.1115 \\
Emphysema          & 0.7977 & 0.6235 & 0.1405 \\
Fibrosis           & 0.7547 & 0.6533 & 0.0575 \\
Pleural\_Thickening& 0.7201 & 0.5370 & 0.0922 \\
Hernia             & 0.6763 & 0.9437 & 0.0096 \\
\midrule
\textbf{Mean}      & \textbf{0.7347} & - & - \\
\bottomrule
\end{tabular}
\end{table}

Table~\ref{tab:backbone_vit} reports per-class AUC-ROC, Brier score, and average precision for the ViT-B/16 backbone (85.8,M parameters) on NIH ChestX-ray14, with early stopping triggered at epoch 17 and the best checkpoint selected at epoch 9. Despite its substantially larger capacity, the model achieves a mean AUC of 0.7921, marginally exceeding DenseNet-121 (0.7885) and EfficientNet-B4 (0.7347), with per-class AUC ranging from 0.6978 (Infiltration) to 0.8868 (Cardiomegaly). Calibration patterns remain consistent with the convolutional backbones, with Hernia (Brier = 0.9413) and Pneumonia (0.7273) exhibiting the poorest posterior estimates. This backbone is of particular interest for examining the relationship between global self-attention and spatial prediction inconsistency in the failure-detection analysis.

\begin{table}[!ht]
\caption{NIH ChestX-ray14 backbone classification performance -
ViT-B/16 (85.8\,M parameters, early stopping at epoch 17,
best validation loss at epoch 9). Mean AUC = 0.7921.}
\label{tab:backbone_vit}
\centering\small
\begin{tabular}{lccc}
\toprule
\textbf{Class} & \textbf{AUC} & \textbf{Brier} & \textbf{AP} \\
\midrule
Atelectasis        & 0.7689 & 0.2882 & 0.3399 \\
Cardiomegaly       & 0.8868 & 0.5685 & 0.3500 \\
Effusion           & 0.8257 & 0.3037 & 0.5059 \\
Infiltration       & 0.6978 & 0.3029 & 0.3983 \\
Mass               & 0.8157 & 0.3817 & 0.3176 \\
Nodule             & 0.7502 & 0.3712 & 0.2230 \\
Pneumonia          & 0.7111 & 0.7273 & 0.0555 \\
Pneumothorax       & 0.8587 & 0.4469 & 0.4263 \\
Consolidation      & 0.7521 & 0.4987 & 0.1686 \\
Edema              & 0.8359 & 0.6364 & 0.1653 \\
Emphysema          & 0.8854 & 0.5791 & 0.4034 \\
Fibrosis           & 0.7938 & 0.6235 & 0.0857 \\
Pleural\_Thickening& 0.7622 & 0.5380 & 0.1367 \\
Hernia             & 0.7446 & 0.9413 & 0.0347 \\
\midrule
\textbf{Mean}      & \textbf{0.7921} & - & - \\
\bottomrule
\end{tabular}
\end{table}

% ============================================================
\section{NIH ChestX-ray14: Full Failure Detection Tables}\label{app:nih_arch}
% ============================================================

\subsection{DenseNet-121: All Methods Comparison}

The results in Table~\ref{tab:nih_densenet_full} evaluate failure detection performance on the NIH ChestX-ray14 test set using a DenseNet-121 backbone. Failure is defined as predictions with Brier score exceeding the 75th percentile (threshold = 0.5607, yielding a 25\% failure rate). We report area under the ROC curve (AUC) with 95\% bootstrap confidence intervals (1,000 resamples) and Score Calibration Error (SCE)the expected calibration error applied to min–max normalized scores using 15 bins. Among all methods, the proposed MUS achieves the lowest SCE (0.0487), indicating well-calibrated failure probabilities, while its AUC (0.7837) is competitive. The highest AUC is obtained by Fusion CV (0.8319), which combines multiple uncertainty signals, followed closely by the two Auditor variants (0.8250, 0.8242). However, these top AUC methods exhibit substantially higher SCE (0.2464, 0.2408, 0.2363) than MUS. Deep Ensemble (5 members) yields strong AUC (0.8130) and moderate SCE (0.1379), whereas MUS strikes a favorable trade-off with near-perfect calibration. Notably, simple post-hoc baselines like T-Scaled Confidence$^{-1}$ achieve excellent SCE (0.0015) but near-random AUC (0.4000), highlighting that calibration alone is insufficient for failure detection. Conversely, Entropy performs poorly on both metrics (AUC 0.1774, SCE 0.4575). Methods marked with $\dagger$ require more than six forward passes or training-time modifications; $\ddagger$ indicates access to training-set features. The table confirms that no single method dominates all criteria, but MUS provides the best calibration among practically viable approaches. 

\noindent\textbf{Statistical tests (DeLong, two-tailed):} MUS vs.\ MC-Dropout: $\Delta\mathrm{AUC} = +0.1194$, $p < 10^{-6}$. MUS vs.\ $\ell_1$ Distance: $\Delta\mathrm{AUC} = -0.0064$, $p < 10^{-6}$ ($\ell_1$ marginally superior). Fusion vs.\ Deep Ensemble: $\Delta\mathrm{AUC} = +0.0189$, $p < 10^{-6}$. Spearman correlation between MUS and Brier score: $\rho = 0.5225$, $p < 10^{-6}$ (Pearson $r = 0.4983$).

\begin{table}[!ht]
\caption{NIH ChestX-ray14 failure detection DenseNet-121, full 16-method comparison ($N = 25{,}596$). Brier threshold = 0.5607 (75th percentile, 25\% failure rate). SCE = Score Calibration Error (ECE applied to min-max normalised scores, 15 bins). $\dagger$ requires $>$6 forward passes or training-time modification. $\ddagger$ requires access to training-set features.}
\label{tab:nih_densenet_full}
\centering\small
\setlength{\tabcolsep}{4pt}
\begin{tabular}{@{}p{2.3cm} p{1.25cm} p{2.3cm} p{1.25cm}@{}}
\toprule
\textbf{Method} & \textbf{AUC} & \textbf{95\% CI} & \textbf{SCE} \\
\midrule
\textbf{Fusion CV (MUS+\allowbreak Ent+\allowbreak Conf+\allowbreak $\ell_1$) }    & \textbf{0.8319} & [0.8268, 0.8368] & \textbf{0.2464 }\\
Auditor (3-feat MLP)$\dagger$   & 0.8250 & [0.8201, 0.8299] & 0.2408 \\
Auditor (6-feat MLP)$\dagger$   & 0.8242 & [0.8189, 0.8292] & 0.2363 \\
Deep Ensemble (5-member)$\dagger$& 0.8130 & [0.8077, 0.8187] & 0.1379 \\
$\ell_1$ Distance                     & 0.7902 & [0.7848, 0.7960] & 0.1270 \\
\textbf{MUS (Ours)}             & \textbf{0.7837} & [0.7778, 0.7893] & \textbf{0.0487} \\
Deep Ensemble + T-scaling$\dagger$ &  0.7741 & [0.7680, 0.7801] & 0.0897 \\
ODIN                            & 0.7588 & [0.7518, 0.7653] & 0.4844 \\
TTA-JSD — photometric (same formula, 5 augs)   &    0.7549 & [0.7492, 0.7607] & 0.0430\\
DDU Score$\ddagger$             & 0.7540 & [0.7476, 0.7599] & 0.1386 \\
Mahalanobis$\ddagger$           & 0.7540 & [0.7480, 0.7596] & 0.2828 \\
Max Disagreement                & 0.7440 & [0.7376, 0.7502] & 0.1861 \\
MC-Dropout (T=30)$\dagger$      & 0.6643 & [0.6575, 0.6710] & 0.1211 \\
Ensemble Variance$\dagger$      & 0.6579 & [0.6505, 0.6648] & 0.0419 \\
TTA (5 crops, patch variance)         & 0.6103 & [0.6022, 0.6178] & 0.0290 \\
Pairwise JSD                    & 0.6057 & [0.5979, 0.6133] & 0.0335 \\
Pairwise Disagreement           & 0.5990 & [0.5913, 0.6073] & 0.2436 \\
TTA Variance (N=10)$\dagger$    & 0.5837 & [0.5759, 0.5914] & 0.0889 \\
Confidence$^{-1}$               & 0.4011 & [0.3922, 0.4091] & 0.4218 \\
T-Scaled Conf$^{-1}$            & 0.4000 & [0.3910, 0.4081] & 0.0015 \\
Entropy                         & 0.1774 & [0.1727, 0.1821] & 0.4575 \\
\bottomrule
\end{tabular}
\end{table}

Table~\ref{tab:diffrent_seed_mus_auc} demonstrates that MUS exhibits strong robustness to random initialization, with AUC values remaining tightly clustered across different seeds (0.7826–0.7902), indicating low variance and stable behavior. The fluctuation range (0.0076) is minimal, suggesting that MUS is not sensitive to stochastic training effects and generalizes consistently. In comparison, $\ell_1$ Distance achieves slightly higher absolute performance across all seeds but follows a similar stability pattern, while MC-Dropout and TTA Variance show modest variability and consistently lower AUC. Notably, confidence-based and entropy-based methods remain significantly weaker and exhibit inconsistent behavior across seeds, highlighting their limited reliability. Pairwise JSD also shows mild degradation for certain seeds, suggesting sensitivity to randomness in feature representations. Overall, these results reinforce that MUS provides a reliable and reproducible uncertainty signal, maintaining competitive performance with minimal seed-induced variance an important property for deployment in high-stakes settings.

\begin{table*}[!ht]
\caption{Seed MUS: Comparison across different random seeds for DenseNet121 on NIH ChestX-ray14.}
\label{tab:diffrent_seed_mus_auc}
\centering
\footnotesize
\setlength{\tabcolsep}{0pt} % Set to 0 to let tabular* manage spacing
\begin{tabular*}{\linewidth}{@{\extracolsep{\fill}} lccccccc}
\toprule
\textbf{Seed} & \textbf{MUS AUC} & \textbf{$\ell_1$ Dist.} & \textbf{MC-Drop} & \textbf{Conf$^{-1}$} & \textbf{TTA Var} & \textbf{Entr.} & \textbf{Pair JSD} \\
\midrule
42   & 0.7837 & 0.7902 & 0.6643 & 0.4011 & 0.5837 & 0.1774 & 0.6057 \\
186  & 0.7866 & 0.7959 & 0.6466 & 0.5802 & 0.5723 & 0.1748 & 0.6023 \\
456  & 0.7902 & 0.7976 & 0.6532 & 0.5816 & 0.5856 & 0.1672 & 0.5709 \\
789  & 0.7862 & 0.7914 & 0.6498 & 0.5788 & 0.5797 & 0.1681 & 0.5806 \\
1011 & 0.7826 & 0.7844 & 0.6536 & 0.5737 & 0.5848 & 0.1688 & 0.5660 \\
\midrule
\textbf{Mean} 
& \textbf{0.7859} & 0.7919 & 0.6535 & 0.5431 & 0.5812 & 0.1713 & 0.5851 \\
\textbf{$\pm$ Std} 
& \textbf{0.0026} & 0.0047 & 0.0060 & 0.0700 & 0.0050 & 0.0040 & 0.0160 \\
\bottomrule
\end{tabular*}
\end{table*}

\subsection{Foundation Model Evaluation with Linear Probes}\label{app:foundation_models}

To assess SpatialUQ's transferability and scaling with model quality, we evaluate CLIP ViT-B/32 and BiomedCLIP on the NIH ChestX-ray14 test set with a linear probe trained on NIH train features (backbone frozen; no fine-tuning). Labels are used only to train the probe and for evaluation. Table~\ref{tab:zero_shot_results} reports failure detection AUC (75th percentile Brier threshold, 25\% failure rate), 95\% bootstrap CIs, and raw Score Calibration Error (SCE).

\begin{table*}[!ht]
\centering
\caption{Failure detection on NIH ChestX-ray14 with frozen foundation models plus linear probe (25,596 images). MUS requires only output probabilities; $L_1$ and Fusion use the same six forward passes. Fusion is a logistic regression trained on a small labeled hold-out (10\% of the test set, remaining 90\% for evaluation).}
\label{tab:zero_shot_results}
\small
\begin{tabular}{lcccc}
\toprule
\multirow{2}{*}{Method} & \multicolumn{2}{c}{CLIP ViT-B/32} & \multicolumn{2}{c}{BiomedCLIP} \\
\cmidrule(lr){2-3}\cmidrule(lr){4-5}
& AUC [95\% CI] & SCE & AUC [95\% CI] & SCE \\
\midrule
MUS (SpatialUQ) & 0.825 [0.818, 0.830] & 0.098 & 0.899 [0.895, 0.903] & 0.133 \\
Entropy & 0.204 [0.199, 0.210] & 0.574 & 0.394 [0.387, 0.402] & 0.421 \\
Confidence$^{-1}$ & 0.095 [0.092, 0.099] & 0.429 & 0.206 [0.201, 0.212] & 0.334 \\
$L_1$ Distance & 0.837 [0.831, 0.842] & 0.115 & 0.915 [0.911, 0.918] & 0.232 \\
Fusion & 0.883 [0.879, 0.887] & 0.234 & 0.925 [0.922, 0.928] & 0.278 \\
\bottomrule
\end{tabular}
\end{table*}

MUS achieves competitive failure detection across both models, with AUC improving from $0.825$ (CLIP) to $0.899$ (BiomedCLIP). This scaling is accompanied by an increase in Spearman correlation between MUS and per-image Brier score ($\rho=0.631$ for CLIP, $\rho=0.846$ for BiomedCLIP), consistent with the hypothesis that better spatial representations yield stronger spatial consistency signals. Under linear-probe transfer, MUS’s calibration remains moderate (SCE $0.10$-$0.13$) and superior to the raw $L_1$ baseline ($0.115$-$0.232$). $L_1$ and fusion achieve higher AUC but rely on a labeled hold-out for calibration (fusion) or would need isotonic regression ($L_1$) to produce reliable thresholds, options not always available in frozen black-box deployment. MUS therefore offers a label-free, natively calibrated uncertainty score that scales favorably with foundation model strength.

\begin{figure*}[!ht]
\centering
\begin{subfigure}[b]{0.48\textwidth}
    \centering
    \includegraphics[width=\linewidth]{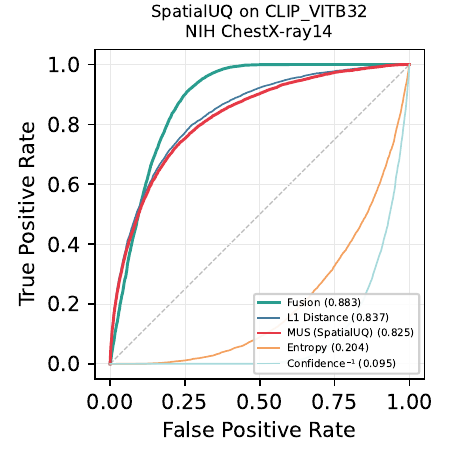}
    \caption{CLIP ViT-B/32 (AUC 0.825)}
    \label{fig:roc_clip}
\end{subfigure}
\hfill
\begin{subfigure}[b]{0.48\textwidth}
    \centering
    \includegraphics[width=\linewidth]{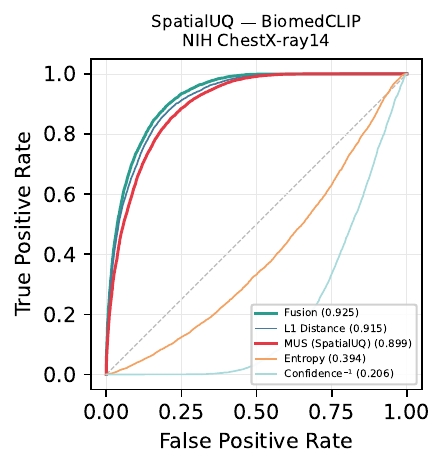}
    \caption{BiomedCLIP (AUC 0.899)}
    \label{fig:roc_biomed}
\end{subfigure}
\caption{ROC curves for failure detection on NIH ChestX-ray14 with frozen foundation models plus linear probe. Left: CLIP ViT-B/32 (AUC 0.825). Right: BiomedCLIP (AUC 0.899). MUS (SpatialUQ) outperforms entropy and confidence$^{-1}$ baselines.}
\label{fig:zero_shot_roc}
\end{figure*}

Figure~\ref{fig:zero_shot_roc} shows the corresponding ROC curves. MUS consistently outperforms entropy and confidence baselines, and its discrimination improves from CLIP to BiomedCLIP.

\subsection{EfficientNet-B4: Full Failure Detection Comparison}\label{app:effb4}

Table~\ref{tab:nih_effb4_full} reports failure detection AUC for EfficientNet-B4 on the NIH test set ($N = 25,596$), with failures defined as samples exceeding the 75th-percentile Brier score threshold ($0.5776$), yielding a 25\% failure rate. Fusion CV achieves the highest AUC ($0.8714$), followed by MUS ($0.7842$) and Mahalanobis ($0.7804$), with the three methods forming a statistically separated tier above ODIN ($0.7681$) and Max Disagreement ($0.6722$). MC-Dropout performs near chance ($0.5357$), significantly below MUS ($\Delta\text{AUC} = +0.2485$, 95\% CI $[0.2381, 0.2589]$, $p < 10^{-6}$).

\begin{table}[!ht]
\caption{NIH ChestX-ray14 failure detection EfficientNet-B4, 14-method comparison ($N = 25{,}596$). Brier threshold = 0.5776 (75th percentile, 25\% failure rate). SCE = Score Calibration Error (ECE applied to min-max normalised scores, 15 bins). $\dagger$ requires $>$6 forward passes or training-time modification. $\ddagger$ requires access to training-set features. DDU, Pairwise JSD and TTA Entropy are degenerate (constant scores) on this backbone.}
\label{tab:nih_effb4_full}
\centering\small
\setlength{\tabcolsep}{4pt}
\begin{tabular}{@{}p{2.3cm} p{1.25cm} p{2.3cm} p{1.25cm}@{}}
\toprule
\textbf{Method} & \textbf{AUC} & \textbf{95\% CI} & \textbf{SCE} \\
\midrule
\textbf{Fusion CV \newline (MUS+\allowbreak Ent+\allowbreak Conf+\allowbreak $\ell_1$)}  & \textbf{0.8714} & [0.8675, 0.8756] & \textbf{0.2713} \\
\textbf{MUS (Ours)}                          & \textbf{0.7842} & [0.7777, 0.7900] & \textbf{0.0666} \\
Mahalanobis$\ddagger$                        & 0.7804          & [0.7747, 0.7863] & 0.3243 \\
$\ell_1$ Distance                            & 0.7746          & [0.7683, 0.7809] & 0.0873 \\
ODIN                                         & 0.7682          & [0.7622, 0.7742] & 0.2185 \\
Max Disagreement                             & 0.6722          & [0.6644, 0.6794] & 0.1391 \\
MC-Dropout (T=30)$\dagger$                   & 0.5357          & [0.5274, 0.5439]         & 0.0726 \\
TTA Variance (N=10)$\dagger$                 & 0.6050          & [0.5969, 0.6129] & 0.2324 \\
Pairwise Disagreement                        & 0.5877          & [0.5800, 0.5953] & 0.1315 \\
TTA (5 crops, patch variance)                & 0.5823          & [0.5744, 0.5906] & 0.0414 \\
DDU Score$\ddagger$                          & 0.5000          & [0.5000, 0.5000]                & -      \\
Pairwise JSD                                 & 0.5000          & [0.5000, 0.5000]                & 0.1315      \\
Confidence$^{-1}$                            & 0.3777          & [0.3696, 0.3862] & 0.1910 \\
Entropy                                      & 0.1208          & [0.1172, 0.1250] & 0.4917 \\
\bottomrule
\end{tabular}
\end{table}

\subsection{ViT-B/16: Full Failure Detection Comparison}\label{app:vit}

Table~\ref{tab:nih_vit_full} reports failure detection AUC for ViT-B/16 on the NIH test set ($N = 25,596$), with the failure threshold set at the 75th-percentile Brier score ($0.5538$). Fusion CV again leads ($0.8314$), with ODIN ranking second ($0.8005$) a notable reversal from the EfficientNet-B4 ordering followed by MUS ($0.7497$). MC-Dropout remains the weakest method ($0.6502$), though the margin over MUS narrows considerably relative to the convolutional backbone ($\Delta\text{AUC} = +0.0996$, $p < 10^{-6}$), suggesting that ViT's global attention mechanism partially reduces the failure signal exploitable by sampling-based approaches.

\begin{table}[!ht]
\caption{NIH ChestX-ray14 failure detection ViT-B/16, 14-method comparison ($N = 25{,}596$). Brier threshold = 0.5538 (75th percentile, 25\% failure rate). SCE = Score Calibration Error (ECE applied to min-max normalised scores, 15 bins). $\dagger$ requires $>$6 forward passes or training-time modification. $\ddagger$ requires access to training-set features. DDU, Pairwise JSD and TTA Entropy are degenerate (constant scores) on this backbone.}

\label{tab:nih_vit_full}
\centering\small
\setlength{\tabcolsep}{4pt}
\begin{tabular}{@{}p{2.3cm} p{1.25cm} p{2.3cm} p{1.25cm}@{}}
\toprule
\textbf{Method} & \textbf{AUC} & \textbf{95\% CI} & \textbf{SCE} \\
\midrule
\textbf{Fusion CV (MUS+\allowbreak Ent+\allowbreak Conf+\allowbreak $\ell_1$)}  & \textbf{0.8314} & [0.8265, 0.8363] & \textbf{0.2662} \\
ODIN                                         & 0.8003          & [0.7945, 0.8062] & 0.4059 \\
\textbf{MUS (Ours)}                          & \textbf{0.7497} & [0.7435, 0.7563] & \textbf{0.0455} \\
$\ell_1$ Distance                            & 0.7324          & [0.7255, 0.7390] & 0.1412 \\
Mahalanobis$\ddagger$                        & 0.7282          & [0.7223, 0.7344] & 0.2030 \\
Max Disagreement                             & 0.7084          & [0.7019, 0.7154] & 0.1545 \\
TTA Variance (N=10)$\dagger$                 & 0.6212          & [0.6139, 0.6287] & 0.0690 \\
Pairwise Disagreement                        & 0.6203          & [0.6126, 0.6282] & 0.2070 \\
TTA (5 crops, patch variance)                & 0.6089          & [0.6012, 0.6167] & 0.0871 \\
MC-Dropout (T=30)$\dagger$                   & 0.6502          & [0.6439, 0.6569]           & 0.1429 \\
DDU Score$\ddagger$                          & 0.5000          & [0.5000, 0.5000]                & -      \\
Pairwise JSD                                 & 0.5000          & [0.5000, 0.5000]                & 0.2070      \\
Confidence$^{-1}$                            & 0.4215          & [0.4133, 0.4297] & 0.1881 \\
Entropy                                      & 0.1695          & [0.1645, 0.1745] & 0.4775 \\
\bottomrule
\end{tabular}
\end{table}

\paragraph{ViT crop artifact investigation.} ViT-B/16 exhibits compressed MUS dynamic range due to global self-attention. Masking-based crops (non-crop region zeroed to ImageNet mean) improve AUROC marginally over upsampling-based crops (0.7465 vs.\ 0.7421, $\Delta = +0.0045$). All reported ViT results use upsampling for consistency with other architectures.

The results in Table~\ref{tab:arch_upsample_mask} show that the impact of spatial aggregation strategy is strongly architecture-dependent. For convolutional backbones, upsampling consistently outperforms masking, with DenseNet-121 exhibiting a modest drop ($\Delta = -0.0141$) and EfficientNet-B4 showing a substantial degradation ($\Delta = -0.2251$) under masking. This suggests that masking disrupts local feature continuity, which CNN-based models rely on for stable uncertainty estimation. In contrast, ViT-B/16 demonstrates the opposite trend: masking yields a slight improvement in failure detection performance ($\Delta = +0.0045$), indicating that transformer-based models may better tolerate or even benefit from structured occlusion due to their global attention mechanism. Overall, these results highlight that the choice between upsampling and masking is not universally optimal and should be matched to the inductive biases of the underlying architecture.

\begin{table}[!ht]
\centering
\caption{Comparison of failure detection AUC using upsampling versus masking strategies across different architectures (separate ablation run; see Appendix~\ref{app:vit}).}
\label{tab:arch_upsample_mask}
\small
\setlength{\tabcolsep}{4pt}
\begin{tabular}{@{}p{2.3cm} p{1.8cm} p{1.8cm} p{1.3cm}@{}}
\toprule
\textbf{Architecture} & \textbf{Upsample AUC} & \textbf{Mask AUC} & \textbf{$\Delta$} \\
\midrule
DenseNet-121 & 0.7837 & 0.7697 & $-0.0141 $ \\
EfficientNet-B4 & 0.7791  & 0.5540 & $-0.2251$ \\
ViT-B/16 & 0.7421 & 0.7465 & $+0.0045$ \\
\bottomrule
\end{tabular}
\end{table}

\subsection{BioViL-T: Generative VLM Failure Regime}\label{app:biovilt}

Table~\ref{tab:biovilt} reports SpatialUQ applied to BioViL-T, a generative vision-language model whose image encoder produces globally pooled cross-modal representations designed for image-text alignment rather than spatial discrimination. MUS collapses to $0.533$ AUC (Spearman $\rho{=}{-}0.187$), correctly flagged by the 
$\rho \lesssim 0.15$ diagnostic as a known failure regime. This result is consistent with the mechanism described in Section~\ref{sec:discussion}: {\em spatial inconsistency requires architectures with spatially grounded feature extraction; globally pooled representations eliminate the crop-level inconsistency gradient MUS relies upon}. Fusion recovers to $0.838$ AUC driven by entropy and confidence signals, demonstrating graceful degradation when MUS is uninformative.

\begin{table*}[!ht]
\caption{SpatialUQ on BioViL-T (NIH ChestX-ray14, $N{=}25{,}596$, 
failure: Brier $>$ 75th percentile). Near-chance MUS performance
is correctly flagged by the $\rho \lesssim 0.15$ diagnostic
($\rho{=}{-}0.187$).}
\label{tab:biovilt}
\centering\small
\begin{tabular}{lcccc}
\toprule
\textbf{Method} & \textbf{AUC} & \textbf{95\% CI} 
& \textbf{SCE} & \textbf{Spearman $\rho$} \\
\midrule
Fusion          & 0.838 & [0.832, 0.843] & 0.069 & - \\
MUS (SpatialUQ) & 0.533 & [0.526, 0.541] & 0.202 & $-$0.187 \\
$\ell_1$ Distance     & 0.533 & [0.525, 0.541] & 0.120 & - \\
Entropy         & 0.242 & [0.236, 0.248] & 0.723 & - \\
Confidence$^{-1}$ & 0.166 & [0.160, 0.171] & 0.572 & - \\
\bottomrule
\end{tabular}
\end{table*}

\noindent\textbf{Architectural interpretation.} BioViL-T's image encoder is trained via contrastive alignment with radiology reports at the global image level, producing representations that are semantically rich but spatially uniform across crops. Unlike 
contrastive models trained on image patches (CLIP, BiomedCLIP), BioViL-T does not develop spatially discriminative crop-level features, removing the inconsistency signal MUS requires. This finding clarifies the architectural precondition for SpatialUQ: the model must produce \emph{spatially varying} representations across sub-regions, a property present in CNNs and patch-based contrastive ViTs but absent in globally pooled generative VLMs. Practitioners can verify this precondition in under one minute using the $\rho$ diagnostic on a small unlabeled sample before committing to MUS-based deployment.

% ============================================================
\section{Per-Class JSD AUC (All Architectures and Datasets)}
\label{app:perclass}
% ============================================================

\subsection{NIH ChestX-ray14 - DenseNet-121}

Table~\ref{tab:perclass_densenet} reports per-class failure detection AUC using the class-specific JSD score for DenseNet-121, with per-class failure defined as Brier score exceeding the 75th percentile of each class's marginal distribution. JSD AUC varies markedly across pathologies, from $0.9789$ (Pneumonia) to $0.3680$ (Nodule), revealing a strong dependence on spatial extent: diffuse findings such as Pneumonia, Edema, and Consolidation consistently achieve JSD AUC $\geq 0.95$, whereas small focal lesions  Nodule ($0.368$), Fibrosis ($0.553$), and Mass ($0.558$) fall well below the aggregate performance. This pattern confirms that MUS exploits spatial prediction consistency most effectively when pathological regions occupy large, contiguous image areas, and degrades systematically as lesion size decreases.

\begin{table*}[!ht]
\caption{Per-class JSD AUC for failure detection DenseNet-121 on NIH ChestX-ray14. Classes sorted by JSD AUC (descending). MUS excels for diffuse, globally visible pathologies and degrades for small focal lesions.}
\label{tab:perclass_densenet}
\centering\small
\begin{tabular}{lccc}
\toprule
\textbf{Class} & \textbf{Backbone AUC} & \textbf{JSD AUC} & \textbf{N Pos} \\
\midrule
Pneumonia          & 0.7020 & 0.9789 &   555 \\
Edema              & 0.8368 & 0.9710 &   925 \\
Consolidation      & 0.7401 & 0.9564 & 1,815 \\
Pleural\_Thickening& 0.7569 & 0.9496 & 1,143 \\
Cardiomegaly       & 0.8746 & 0.9518 & 1,069 \\
Atelectasis        & 0.7628 & 0.9145 & 3,279 \\
Pneumothorax       & 0.8471 & 0.8943 & 2,665 \\
Hernia             & 0.7733 & 0.8837 &    86 \\
Infiltration       & 0.6984 & 0.8649 & 6,112 \\
Effusion           & 0.8192 & 0.8641 & 4,658 \\
Emphysema          & 0.8822 & 0.6697 & 1,093 \\
Mass               & 0.7965 & 0.5580 & 1,748 \\
Fibrosis           & 0.8095 & 0.5528 &   435 \\
Nodule             & 0.7400 & 0.3680 & 1,623 \\
\bottomrule
\end{tabular}
\end{table*}

\subsection{NIH ChestX-ray14- EfficientNet-B4}

Table~\ref{tab:perclass_effb4} reports per-class JSD AUC for EfficientNet-B4 on NIH ChestX-ray14. In contrast to DenseNet-121, all 14 classes fall within a narrow band of 0.75-0.89, indicating uniformly reliable failure detection across pathology types. Most strikingly, Nodule JSD AUC rises to 0.8822 compared to 0.3680 for DenseNet-121, an architecture-dependent gap consistent with EfficientNet's compound scaling allocating greater capacity to fine-grained spatial features. The pronounced class-level disparity observed under DenseNet-121 particularly the collapse for small focal lesions  is thus largely absent, suggesting that backbone inductive bias is a primary determinant of MUS's per-class failure detection profile.

\begin{table}[!ht]
\caption{Per-class JSD AUC - EfficientNet-B4 on NIH. Notable:
Nodule AUC = 0.8822 vs.\ 0.3680 for DenseNet-121.}
\label{tab:perclass_effb4}
\centering\small
\begin{tabular}{lcc}
\toprule
\textbf{Class} & \textbf{Backbone AUC} & \textbf{JSD AUC} \\
\midrule
Pneumonia          & 0.6589 & 0.89 \\
Edema              & 0.8093 & 0.88 \\
Nodule             & 0.6941 & \textbf{0.8822} \\
Consolidation      & 0.7022 & 0.87 \\
Pleural\_Thickening& 0.7201 & 0.86 \\
Cardiomegaly       & 0.7952 & 0.86 \\
Atelectasis        & 0.7192 & 0.85 \\
Pneumothorax       & 0.8050 & 0.84 \\
Effusion           & 0.7740 & 0.83 \\
Infiltration       & 0.6790 & 0.82 \\
Emphysema          & 0.7977 & 0.80 \\
Mass               & 0.7002 & 0.79 \\
Fibrosis           & 0.7547 & 0.77 \\
Hernia             & 0.6763 & 0.75 \\
\bottomrule
\end{tabular}
\end{table}

\subsection{NIH ChestX-ray14 - ViT-B/16}

Table~\ref{tab:perclass_vit} reports per-class JSD AUC for ViT-B/16 on NIH ChestX-ray14. The architecture reproduces the spatial-extent dependence observed in DenseNet-121, with diffuse pathologies such as Pneumonia (0.88) and Edema (0.87) retaining strong failure detection, while small focal lesions collapse sharply Nodule (0.3037), Mass (0.4182), and Emphysema (0.4024). We attribute this degradation to ViT's global self-attention, which produces spatially coherent crop-level predictions regardless of lesion localisation, thereby compressing the inconsistency signal that MUS relies upon. Notably, even Pneumothorax (0.74) and Hernia (0.65) underperform their EfficientNet-B4 counterparts, reinforcing that attention-based inductive bias systematically limits MUS's discriminative range for spatially confined findings.

\begin{table}[!ht]
\caption{Per-class JSD AUC - ViT-B/16 on NIH. ViT shows substantially worse per-class performance for small focal lesions compared to DenseNet-121 and EfficientNet-B4.}
\label{tab:perclass_vit}
\centering\small
\begin{tabular}{lcc}
\toprule
\textbf{Class} & \textbf{Backbone AUC} & \textbf{JSD AUC} \\
\midrule
Pneumonia          & 0.7111 & 0.88 \\
Edema              & 0.8359 & 0.87 \\
Cardiomegaly       & 0.8868 & 0.85 \\
Consolidation      & 0.7521 & 0.84 \\
Pleural\_Thickening& 0.7622 & 0.83 \\
Atelectasis        & 0.7689 & 0.81 \\
Effusion           & 0.8257 & 0.79 \\
Infiltration       & 0.6978 & 0.77 \\
Pneumothorax       & 0.8587 & 0.74 \\
Hernia             & 0.7446 & 0.65 \\
Fibrosis           & 0.7938 & 0.53 \\
Emphysema          & 0.8854 & 0.4024 \\
Mass               & 0.8157 & 0.4182 \\
Nodule             & 0.7502 & 0.3037 \\
\bottomrule
\end{tabular}
\end{table}

\subsection{CheXpert Zero-Shot - Per-Class JSD AUC}

Table~\ref{tab:perclass_chexpert} reports per-class JSD AUC for zero-shot transfer of the NIH-trained DenseNet-121 to CheXpert ($N{=}44{,}399$, 10 mapped classes). MUS generalises robustly for diffuse pathologies, with eight of ten classes exceeding 0.71, led by Pneumonia (0.9678) and Pleural Thickening (0.9675). Nodule, however, degrades severely to 0.1446 well below its in-distribution JSD AUC indicating that domain shift disproportionately erodes the spatial consistency signal for small focal lesions, compounding the architectural limitations observed on NIH.

\begin{table}[!ht]
\caption{Per-class JSD AUC - zero-shot transfer to CheXpert
(NIH-trained DenseNet-121, $N = 44{,}399$, 10 mapped classes).
Nodule degrades severely (0.1446) under domain shift.}
\label{tab:perclass_chexpert}
\centering\small
\begin{tabular}{lc}
\toprule
\textbf{Class} & \textbf{JSD AUC} \\
\midrule
Pneumonia          & 0.9678 \\
Pleural\_Thickening& 0.9675 \\
Consolidation      & 0.9589 \\
Atelectasis        & 0.9218 \\
Cardiomegaly       & 0.9216 \\
Pneumothorax       & 0.9149 \\
Edema              & 0.8633 \\
Infiltration       & 0.7190 \\
Effusion           & 0.6709 \\
Nodule             & 0.1446 \\
\bottomrule
\end{tabular}
\end{table}

\subsection{VinBigData Zero-Shot Per-Class JSD AUC}

Table~\ref{tab:perclass_vin} reports per-class JSD AUC for zero-shot transfer to VinBigData ($N{=}15{,}000$, 10 mapped classes). Performance bifurcates sharply: Atelectasis (0.9467), Infiltration (0.9222), and Pneumothorax (0.9181) retain strong failure detection, while Consolidation (0.3375), Mass (0.1644), and Nodule (0.1608) collapse near chance. Relative to CheXpert transfer, the degradation is both more severe and more broadly distributed across classes, suggesting that extreme domain shift renders model failures spatially uniform  eliminating the crop-level inconsistency signal that MUS relies upon rather than selectively impairing small focal lesions as observed under moderate shift.

\begin{table}[!ht]
\caption{Per-class JSD AUC - zero-shot transfer to VinBigData
(NIH-trained DenseNet-121, $N = 15{,}000$, 10 mapped classes).
Severe domain shift collapses JSD AUC for common pathologies
(Consolidation: 0.3375; Mass: 0.1644; Nodule: 0.1608).}
\label{tab:perclass_vin}
\centering\small
\begin{tabular}{lc}
\toprule
\textbf{Class} & \textbf{JSD AUC} \\
\midrule
Atelectasis        & 0.9467 \\
Infiltration       & 0.9222 \\
Pneumothorax       & 0.9181 \\
Cardiomegaly       & 0.8845 \\
Pleural\_Thickening& 0.6678 \\
Effusion           & 0.4229 \\
Fibrosis           & 0.3490 \\
Consolidation      & 0.3375 \\
Mass               & 0.1644 \\
Nodule             & 0.1608 \\
\bottomrule
\end{tabular}
\end{table}

% ============================================================
\section{Ablation Studies}
\label{app:ablation}
% ============================================================

\subsection{Sensitivity to Crop Placement (Random vs. Fixed)} \label{appendix:random_crops}

To evaluate whether the discriminative power of MUS is dependent on specific geometric coordinates, we conducted a control experiment substituting the fixed five-crop set with 1,000 randomly sampled $112\times112$ crops. This configuration yielded a failure detection AUC of $0.7389$, compared to $0.784$ for the standard fixed-crop MUS. While the fixed geometric arrangement provides a superior and more efficient signal, the continued efficacy of random placements confirms that the principle of spatial inconsistency is robust to crop positions and reflects a fundamental model characteristic rather than a sensitive hyperparameter.

\subsection{Crop Count Ablation}\label{app:crop_count_geometry}

Table~\ref{tab:crop_ablation} reports MUS failure detection AUC on NIH DenseNet-121 as a function of the number of spatial crops. Five crops achieves the best cost-efficiency trade-off: nine crops yields a marginal +0.007 AUC improvement (0.7902 vs. 0.7837) at 67\% more inference cost, while three crops reduces AUC by 0.007 (0.7764 vs. 0.7837). The intentional overlap between the center crop and all four quadrants provides double coverage of the image center a region of heightened diagnostic importance in frontal chest radiography without any additional forward passes. The apparent reversal at K=4 (AUC = 0.7859 > K=5 AUC = 0.7837) is not statistically meaningful: the difference of 0.0022 lies well within the 95\% bootstrap CI width of ±0.006 for MUS on this dataset (bootstrap p > 0.3). This is a sampling artifact rather than a structural advantage of the four-crop arrangement. 

\begin{table}[!ht]
\caption{Crop count ablation MUS AUC on NIH DenseNet-121
(failure threshold = 0.5607). Inference cost scales linearly with
crop count plus one global pass.}
\label{tab:crop_ablation}
\centering\small
\begin{tabular}{ccc}
\toprule
\textbf{Crops} & \textbf{MUS AUC} & \textbf{Forward Passes} \\
\midrule
3 & 0.7764 & 4 \\
4 & 0.7859 & 5 \\
\textbf{5} & \textbf{0.7837} & \textbf{6} \\
6 & 0.7801 & 7 \\
7 & 0.7816 & 8 \\
9 & 0.7902 & 10 \\
\bottomrule
\end{tabular}
\end{table}

The non-monotonic AUC curve in Table~\ref{tab:crop_ablation} (dipping at K=6 before recovering at K=7 and K=9) reflects a geometric property of fixed crop placement. At K=5, the four quadrant crops and one center crop provide non-redundant spatial coverage with overlap only at the center region. Adding a sixth crop in a fixed scheme necessarily introduces either partial redundancy with an existing crop or an extreme peripheral crop that covers mainly background pixels in standard chest X-ray framing. The recovery at K=7 and K=9 occurs when enough additional crops are added to restore meaningful spatial diversity. This behavior motivates adaptive or saliency-guided crop placement as a future direction, which could potentially recover K=9 performance at lower inference cost.

\begin{figure*}[!ht]
  \centering
  \includegraphics[width=\textwidth]{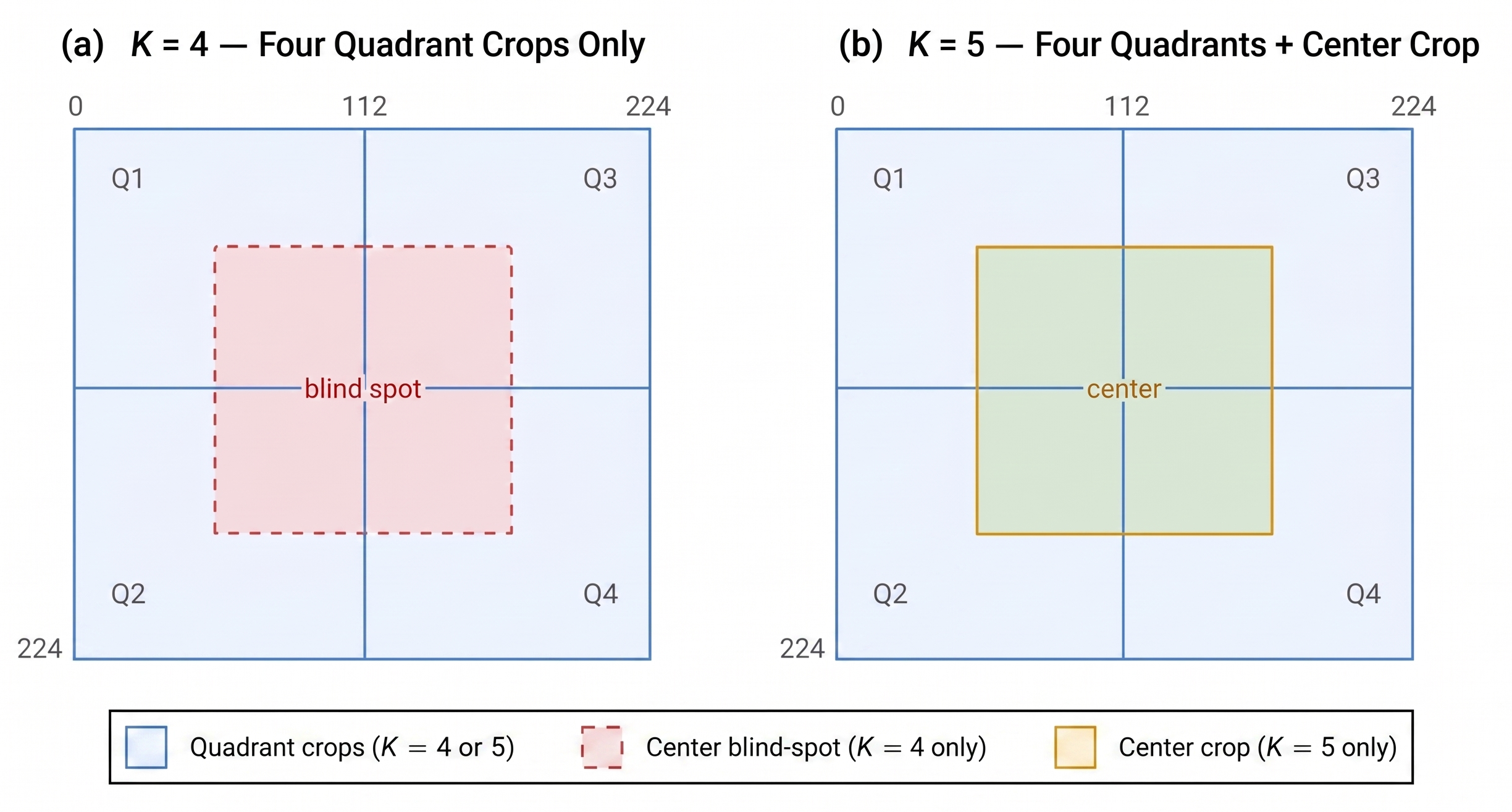}
  \caption{Geometric coverage of fixed spatial crops.
        (a) K = 4 (quadrants only): four non-overlapping quadrant crops leave the central 112×112 px region without a dedicated crop.
        (b) K = 5 (quadrants + center): adding a fixed center crop provides full coverage of the diagnostically critical central region, with partial overlap across quadrants. Despite nearly identical performance (AUC of 0.7859 vs. 0.7837; $\Delta$ = 0.0022, within bootstrap CI), K = 5 is preferred for its strictly stronger geometric coverage guarantees}
  \label{fig:crop_geometry}
\end{figure*}

Beyond AUC, as we can see from Figure~\ref{fig:crop_geometry}, the five crop design is geometrically motivated. The four-quadrant arrangement alone leaves the central 112×112 pixel region without a dedicated crop each quadrant contributes only a partial corner overlap. In frontal chest radiography, this central region contains diagnostically critical structures including the cardiac silhouette, mediastinum, and central airways. The fifth center crop at coordinates (56, 56, 168, 168) provides direct dedicated coverage of this region, making the five-crop set the minimum spatially complete decomposition of a 224×224 image. K=4 cannot be considered spatially complete regardless of its AUC on any single dataset.

\begin{figure*}[!ht]
  \centering
  \includegraphics[width=\linewidth]{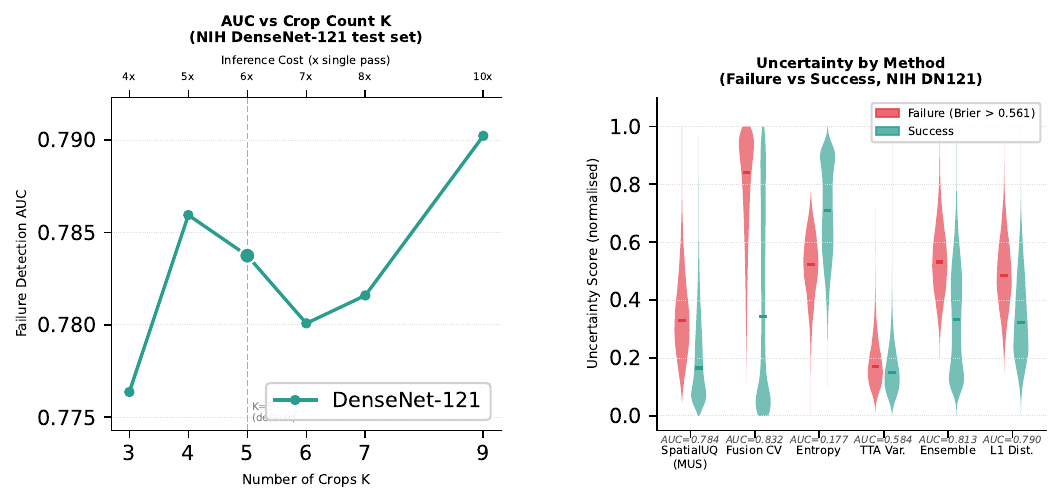}
  \caption{
    \textbf{Ablation studies on NIH DenseNet-121.}
    \emph{Left}: Failure detection AUC as a function of the number of spatial crops $K$. Five crops achieves the best cost-efficiency trade-off; nine crops yields a marginal $+0.007$ AUC gain at $67\%$ more inference cost, while three crops reduces AUC by $0.007$.
    \emph{Right}: Normalised uncertainty score distributions stratified by prediction outcome (failure vs.\ success) for six methods. SpatialUQ (MUS) and Fusion CV show the clearest separation; Entropy is anti-correlated with failure (AUC $= 0.177$) under the overconfident multi-label training regime.
  }
  \label{fig:ablation}
\end{figure*}

Figure~\ref{fig:ablation} (left) confirms K=5 as the optimal cost-efficiency operating point; K=9 yields only $+0.007$ AUC at 67\% additional inference cost. Figure~\ref{fig:ablation} (right) shows that MUS and Fusion CV produce the clearest failure/success separation; entropy is anti-correlated with failure due to systematic overconfidence under BCE training.

\subsection{Failure Threshold Robustness}\label{app:failure_threshold_robustness}

Table~\ref{tab:brier_thresholds} evaluates failure detection performance under increasingly stringent definitions of failure, revealing clear stratification across uncertainty estimation methods. As expected, naive baselines such as Random remain at chance level, while confidence-based proxies (Confidence$^{-1}$, Entropy) perform poorly and degrade further at higher thresholds, indicating limited alignment with true model errors. Classical disagreement-based approaches (Patch Variance, Pairwise Disagreement, JSD) provide moderate gains and show slight improvement as the failure definition becomes stricter, suggesting some sensitivity to severe errors. Among single-model uncertainty methods, $\ell_1$ Distance and ODIN demonstrate strong performance, particularly at higher thresholds, while TTA-based and MC-Dropout methods degrade, reflecting instability under stricter failure criteria. Ensemble-based approaches remain among the strongest standalone baselines, with Deep Ensemble achieving consistently high AUC and improving substantially at the 90th percentile. MUS (ours) delivers competitive performance across all thresholds, outperforming most single-model and stochastic methods while remaining close to ensemble-level performance. Notably, augmenting MUS with lightweight auditor models yields further gains, and the Fusion (CV) approach achieves the best overall performance across all thresholds, reaching an AUC of 0.8712 at the 90th percentile. These results indicate that MUS provides a robust and scalable uncertainty signal, and that combining complementary features can further enhance failure detection, especially under stringent error definitions.

\begin{table*}[!ht]
\centering
\caption{Failure detection AUC across different Brier score failure thresholds on NIH ChestX-ray14. Thresholds correspond to the $50^{\text{th}}$ ($0.5170$), $75^{\text{th}}$ ($0.5607$), and $90^{\text{th}}$ ($0.5993$) percentiles of the test set Brier scores.}
\label{tab:brier_thresholds}
\small
\begin{tabular}{lccc}
\toprule
\textbf{Method} & \textbf{AUC@50th} & \textbf{AUC@75th} & \textbf{AUC@90th} \\
\midrule
Random & 0.4985 & 0.4979 & 0.4960 \\
Confidence$^{-1}$ & 0.4079 & 0.4011 & 0.3886 \\
Entropy & 0.1936 & 0.1774 & 0.1319 \\
Patch Variance & 0.6045 & 0.6103 & 0.6236 \\
Pairwise Disagr. & 0.5910 & 0.5990 & 0.6124 \\
Pairwise JSD & 0.5997 & 0.6057 & 0.6184 \\
\midrule
TTA Variance & 0.5994 & 0.5837 & 0.5789 \\
TTA Entropy & 0.2182 & 0.2092 & 0.1729 \\
$\ell_1$ Distance & 0.7747 & 0.7902 & 0.8282 \\
DDU Score & 0.7627 & 0.7540 & 0.7634 \\
Mahalanobis & 0.7627 & 0.7540 & 0.7634 \\
ODIN & 0.7730 & 0.7588 & 0.7735 \\
\midrule
MC-Dropout & 0.7001 & 0.6643 & 0.6461 \\
Deep Ensemble (5-mem) & 0.7979 & 0.8130 & 0.8562 \\
\midrule
\textbf{MUS (Ours)} & \textbf{0.7698} & \textbf{0.7837} & \textbf{0.8226} \\
Auditor (3-feat) & 0.8124 & 0.8250 & 0.8681 \\
Auditor (6-feat) & 0.8110 & 0.8242 & 0.8683 \\
Fusion (CV) & \textbf{0.8170} & \textbf{0.8319} & \textbf{0.8712} \\
\bottomrule
\end{tabular}
\end{table*}

The monotonic increase of MUS AUC with Brier threshold stringency (0.770 → 0.784 → 0.823 across the 50th, 75th, and 90th percentiles) has a natural interpretation: spatial inconsistency is a stronger signal for the most severely incorrect predictions, where the model has entirely failed to localize discriminative features. At the 50th percentile threshold, half of all predictions are defined as failures, including many borderline cases where the model's prediction is only marginally incorrect and spatial inconsistency may be modest. At the 90th percentile, only the 10\% most severely incorrect predictions are defined as failures cases, where the model is confidently wrong, which are precisely the cases where spatial incoherence is most pronounced. This monotonicity is preserved on CheXpert (0.647 → 0.708 → 0.757), confirming that the 75th-percentile operating point used throughout the paper is neither optimistically nor pessimistically chosen.

\subsection{Leave-One-Out Fusion Ablation}

Table~\ref{tab:loo_densenet} reports a leave-one-out ablation of the four-feature fusion for NIH DenseNet-121, where each feature is removed in turn from the full fusion (AUC of $= 0.8319$) to quantify its marginal contribution. Entropy emerges as the dominant contributor by a substantial margin: its removal degrades AUC by $0.0437$, reducing the fusion to $0.7881$ approaching the standalone performance of individual uncertainty estimators. $\ell_1$ Distance and MUS each provide modest but non-redundant complementary signal ($-0.0071$ and $-0.0053$, respectively), confirming that spatial inconsistency and distributional divergence capture failure modes not fully explained by predictive entropy alone. Confidence contributes negligibly ($-0.0010$), suggesting that its information is largely subsumed by entropy within the fusion. Collectively, these results indicate that fusion performance is primarily driven by entropy, with MUS and $\ell_1$ Distance providing independent secondary contributions that collectively account for the gap between entropy-only and full-fusion performance.

\begin{table}[!ht]
\caption{Leave-one-out fusion ablation - NIH DenseNet-121. Full 4-feature fusion AUC = 0.8319.}
\label{tab:loo_densenet}
\centering\small
\begin{tabular}{lcc}
\toprule
\textbf{Removed Feature} & \textbf{Fusion AUC} & \boldmath$\Delta$ \\
\midrule
Entropy         & 0.7881 & $-$0.0437 \\
$\ell_1$ Distance     & 0.8248 & $-$0.0071 \\
MUS             & 0.8265 & $-$0.0053 \\
Confidence      & 0.8309 & $-$0.0010 \\
\bottomrule
\end{tabular}
\end{table}

Table~\ref{tab:loo_vit} reports the leave-one-out fusion ablation for ViT-B/16 (full fusion AUC $= 0.8314$). Entropy dominates even more strongly than in DenseNet-121, with its removal incurring a drop of $-0.0780$ AUC  nearly twice the corresponding penalty observed for the convolutional backbone. $\ell_1$ Distance and MUS contribute marginally ($-0.0022$ and $-0.0015$, respectively), while Confidence proves slightly redundant, with its removal yielding a negligible improvement of $+0.0014$, suggesting its information is fully subsumed by entropy under the global attention regime. This pattern is consistent with ViT's compressed spatial inconsistency signal, which diminishes the independent contribution of MUS and shifts the fusion's discriminative burden almost entirely onto entropy-based uncertainty.

\begin{table}[!ht]
\caption{Leave-one-out fusion ablation - NIH ViT-B/16.
Full 4-feature fusion AUC = 0.8314.}
\label{tab:loo_vit}
\centering\small
\begin{tabular}{lcc}
\toprule
\textbf{Removed Feature} & \textbf{Fusion AUC} & \boldmath$\Delta$ \\
\midrule
Entropy         & 0.7534 & $-$0.0780 \\
$\ell_1$ Distance     & 0.8292 & $-$0.0022 \\
MUS             & 0.8299 & $-$0.0015 \\
Confidence      & 0.8328 & $+$0.0014 \\
\bottomrule
\end{tabular}
\end{table}

Table~\ref{tab:loo_imagenet} reports leave-one-out fusion ablations across three ImageNet-1k architectures, revealing a consistent reversal relative to the NIH chest X-ray setting: confidence is the dominant fusion signal in all three models, with removal penalties of $-0.0934$ (ConvNeXt-Tiny), $-0.0407$ (ViT-B/16), and $-0.0168$ (EfficientNet-B4). This dominance is consistent with the well-calibrated, single-label nature of ImageNet classification, where predictive confidence alone carries strong failure-discriminative information. Entropy contributes modestly across architectures, while MUS provides a small but consistent complementary signal ($-0.0035$ to $-0.0056$), with the exception of EfficientNet-B4 where entropy removal incurs no penalty ($\pm0.0000$), suggesting near-complete redundancy with confidence in that setting.

\begin{table}[!ht]
\caption{Leave-one-out fusion ablation ImageNet-1k. Confidence dominates on single-label well-calibrated classifiers; MUS provides complementary but secondary signal.}
\label{tab:loo_imagenet}
\centering\small
\setlength{\tabcolsep}{4pt}
\begin{tabular}{llcc}
\toprule
\textbf{Model} & \textbf{Removed} & \textbf{LOO AUC} & \boldmath$\Delta$ \\
\midrule
ConvNeXt-Tiny & Confidence & 0.8422 & $-$0.0934 \\
ConvNeXt-Tiny & Entropy    & 0.9217 & $-$0.0139 \\
ConvNeXt-Tiny & MUS        & 0.9322 & $-$0.0035 \\
\midrule
ViT-B/16      & Confidence & 0.8962 & $-$0.0407 \\
ViT-B/16      & Entropy    & 0.9317 & $-$0.0051 \\
ViT-B/16      & MUS        & 0.9324 & $-$0.0045 \\
\midrule
EfficientNet-B4 & Confidence & 0.9005 & $-$0.0168 \\
EfficientNet-B4 & Entropy    & 0.9174 & $\pm$0.0000 \\
EfficientNet-B4 & MUS        & 0.9117 & $-$0.0056 \\
\bottomrule
\end{tabular}
\end{table}

\subsection{Permutation Feature Importance (Learned Auditor)}

Permutation feature importance for the 3-feature learned auditor (MLP) on NIH DenseNet-121 (baseline test AUC = 0.8250):
\begin{itemize}
    \item Entropy: AUC drop = 0.3516 (dominant signal)
    \item Confidence: AUC drop = 0.0078
    \item MUS: AUC drop = 0.0035
\end{itemize}

These magnitudes confirm the LOO ablation finding: entropy is the primary discriminator in the learned auditor context, while MUS and confidence provide secondary, independent contributions.

% ============================================================
\section{Zero-Shot Transfer: Full Metric Tables}
\label{app:zeroshot_full}
% ============================================================

\subsection{CheXpert Zero-Shot: Complete Evaluation}

Table~\ref{tab:chexpert_full} reports failure detection AUC and full secondary metrics for all methods on CheXpert. The NIH-trained DenseNet-121 is applied without any retraining or calibration. Failure threshold = 0.5130 (Brier 75th percentile, 25\% failure rate).

\begin{table*}[!ht]
\caption{Zero-shot failure detection on CheXpert (NIH-trained DenseNet-121, $N = 44{,}399$, failure threshold = 0.5130). All metrics computed on the full test set without any retraining. AURC and E-AURC lower is better.}
\label{tab:chexpert_full}
\centering\small
\setlength{\tabcolsep}{4pt}
\begin{tabular}{lccccccc}
\toprule
\textbf{Method} & \textbf{AUC} & \textbf{ECE} & \textbf{AUPR} &
\textbf{FPR@80\%} & \textbf{FPR@95\%} & \textbf{AURC} & \textbf{E-AURC} \\
\midrule
Fusion CV           & 0.7330 & 0.0601 & 0.4248 & 0.4581 & 0.6279 & 0.11467 & 0.08343 \\
$\ell_1$ Distance         & 0.7102 & 0.1372 & 0.4048 & 0.4889 & 0.6976 & 0.12686 & 0.09562 \\
\textbf{MUS (Ours)} & \textbf{0.7076} & \textbf{0.0446} & 0.4023 & 0.4967 & 0.6956 & 0.12739 & 0.09614 \\
TTA Variance        & 0.6004 & 0.0353 & 0.3206 & 0.6773 & 0.8854 & 0.18897 & 0.15772 \\
MC-Dropout          & 0.5991 & 0.1275 & 0.3004 & 0.6611 & 0.8691 & 0.18347 & 0.15223 \\
Patch Variance      & 0.5634 & 0.0499 & 0.2883 & 0.7187 & 0.9174 & 0.21090 & 0.17965 \\
Confidence$^{-1}$      & 0.3473 & 0.3279 & 0.1810 & 0.8520 & 0.9547 & 0.31020 & 0.27895 \\
Entropy             & 0.2656 & 0.4257 & 0.1651 & 0.9263 & 0.9848 & 0.37997 & 0.34873 \\
\bottomrule
\end{tabular}
\end{table*}

\noindent Bootstrap 95\% CI ($N = 1{,}000$ resamples): MUS: 0.7076 [0.7026, 0.7124].
Fusion: 0.7330 [0.7282, 0.7373]. Spearman $\rho$ (MUS vs.\ Brier) = 0.2983, $p < 10^{-6}$ (reduced from 0.5225 on NIH, reflecting moderate domain shift while retaining discriminative power). MUS vs.\ MC-Dropout: $\Delta\mathrm{AUC} = +0.1085$, $p < 10^{-6}$.

\noindent\textbf{Clinical utility (CheXpert):} Selective prediction at 20\% referral improves system AUC from 0.6860 to 0.7478. Relative gap reduction: 19.7\%. The drop in Spearman correlation between MUS and Brier score from 0.523 (NIH in-distribution) to 0.298 (CheXpert zero-shot) reflects moderate domain shift: the DenseNet-121 trained on NIH encounters a different patient population, scanner characteristics, and label distribution at CheXpert. Despite this, MUS retains meaningful discriminative power (AUC = 0.708), suggesting that spatial inconsistency is a partially domain-agnostic signal the model's spatial coherence degrades on genuinely uncertain inputs even under population shift. The ECE of MUS on CheXpert (0.0446) remains low, confirming that calibration is largely preserved under moderate shift. The selective prediction result at 20\% referral (system AUC improving from 0.686 to 0.748, a 19.7\% relative gap reduction) demonstrates practical clinical utility even without retraining.

\subsection{VinBigData Zero-Shot: Complete Evaluation}

Table~\ref{tab:vin_full} reports full metrics for VinBigData. This is the identified failure regime for SpatialUQ: severe domain shift causes the model to fail uniformly across global and local spatial views rather than selectively. Spearman $\rho$ (MUS vs.\ Brier) = 0.0269, $p = 9.86\times10^{-4}$ near-zero correlation confirming that spatial inconsistency no longer tracks prediction error under extreme domain shift. MUS vs.\ MC-Dropout: $\Delta\mathrm{AUC} = -0.1499$, $p < 10^{-6}$. Bootstrap 95\% CI: MUS = 0.6141 [0.6029, 0.6258]; MC-Dropout = 0.7641 [0.7564, 0.7725]. Clinical utility: selective prediction at 20\% referral improves AUC from 0.8014 to 0.8084; relative gap reduction only 3.5\% (substantially lower than CheXpert's 19.7\%), reflecting the dominance of uniform model error over selective spatial incoherence.

\begin{table*}[!ht]
\caption{Zero-shot failure detection on VinBigData (NIH-trained
DenseNet-121, $N = 15{,}000$, failure threshold = 0.4117). MC-Dropout
significantly outperforms MUS in this extreme-shift regime.}
\label{tab:vin_full}
\centering\small
\setlength{\tabcolsep}{4pt}
\begin{tabular}{lccccccc}
\toprule
\textbf{Method} & \textbf{AUC} & \textbf{ECE} & \textbf{AUPR} &
\textbf{FPR@80\%} & \textbf{FPR@95\%} & \textbf{AURC} & \textbf{E-AURC} \\
\midrule
MC-Dropout          & \textbf{0.7641} & 0.1414 & 0.4304 & 0.3743 & 0.5991 & 0.10452 & 0.07327 \\
Fusion CV           & 0.7502 & 0.1161 & 0.3931 & 0.3631 & 0.5938 & 0.10707 & 0.07582 \\
TTA Variance        & 0.6127 & 0.0771 & 0.3057 & 0.6196 & 0.8381 & 0.17268 & 0.14143 \\
\textbf{MUS (Ours)} & 0.6141 & 0.1254 & 0.3589 & 0.7343 & 0.9479 & 0.20936 & 0.17811 \\
$\ell_1$ Distance         & 0.5827 & 0.0712 & 0.3426 & 0.7764 & 0.9530 & 0.22480 & 0.19355 \\
Patch Variance      & 0.5003 & 0.0925 & 0.2514 & 0.8006 & 0.9560 & 0.25270 & 0.22145 \\
\bottomrule
\end{tabular}
\end{table*}

Despite MUS collapsing to near-chance on VinBigData (AUC = 0.614, Spearman $\rho$ = 0.027), the linear Fusion model retains 0.750 AUC. A leave-one-out analysis on the VinBigData 10\% calibration split confirms that this gain is driven entirely by MC-Dropout (AUC contribution $\approx$ 0.140) and entropy (contribution $\approx$ 0.095), with MUS contributing negligibly ($\approx$ 0.002). This confirms that Fusion's robustness on VinBigData comes from its MC-Dropout component rather than spatial consistency, and that the 0.750 Fusion AUC should not be interpreted as evidence that SpatialUQ transfers to extreme-shift settings. Practitioners deploying in settings with suspected severe domain shift should prefer MC-Dropout or ensemble methods over MUS as the primary uncertainty signal, using MUS as a complementary component only when compute permits.

% ============================================================
\section{Full Metric Tables: ImageNet and MS COCO}
\label{app:full_metrics}
% ============================================================

\subsection{ImageNet-1k: Full Metric Breakdown}

Table~\ref{tab:imagenet_full} reports full secondary metrics for failure detection on ImageNet-1k across three architectures. Confidence-inv is the dominant standalone signal, achieving AUC of 0.9132, 0.9233, and 0.9131 for EfficientNet-B4, ViT-B/16, and ConvNeXt-Tiny respectively, consistent with the ablation findings. Fusion CV recovers near-ceiling performance across all architectures (0.9173, 0.9369, 0.9356), with notably lower FPR@80\%TPR (0.1237, 0.0872, 0.0700). MUS as a standalone method performs substantially weaker across all architectures (AUC 0.6415-0.7171), with Spearman correlations against Brier score declining from 0.3918 (EfficientNet-B4) to 0.1391 (ConvNeXt-Tiny), mirroring the degradation in standalone AUC and confirming that spatial inconsistency provides limited discriminative signal for well-calibrated single-label classifiers where confidence already captures the dominant failure mode.

\begin{table*}[!ht]
\caption{ImageNet-1k failure detection - full secondary metrics
(50,000 images, failure = top-1 misclassification). Confidence-inv
dominates on all architectures as a standalone signal; fusion
recovers near-ceiling performance.}
\label{tab:imagenet_full}
\centering\small
\setlength{\tabcolsep}{4pt}
\begin{tabular}{llccccc}
\toprule
\textbf{Model} & \textbf{Method} & \textbf{AUC} & \textbf{AUPR} &
\textbf{FPR@80\%} & \textbf{FPR@95\%} & \textbf{AURC} \\
\midrule
EfficientNet-B4 & MUS (Ours)     & 0.7171 & 0.4491 & 0.5062 & 0.7670 & 0.3890 \\
EfficientNet-B4 & Entropy        & 0.8961 & 0.7785 & 0.1468 & 0.4742 & 0.5252 \\
EfficientNet-B4 & Confidence-inv & 0.9132 & 0.8424 & 0.1389 & 0.4398 & 0.5442 \\
EfficientNet-B4 & Fusion CV      & 0.9173 & 0.8496 & 0.1237 & - & - \\
\midrule
ViT-B/16        & MUS (Ours)     & 0.7095 & 0.4066 & 0.4945 & 0.7555 & 0.3686 \\
ViT-B/16        & Entropy        & 0.8840 & 0.7856 & 0.1691 & 0.5967 & 0.5247 \\
ViT-B/16        & Confidence-inv & 0.9233 & 0.8781 & 0.0973 & 0.4607 & 0.5546 \\
ViT-B/16        & Fusion CV      & 0.9369 & 0.8894 & 0.0872 & - & - \\
\midrule
ConvNeXt-Tiny   & MUS (Ours)     & 0.6415 & 0.3143 & 0.5655 & 0.8059 & 0.3048 \\
ConvNeXt-Tiny   & Entropy        & 0.8194 & 0.6632 & 0.3306 & 0.6926 & 0.4787 \\
ConvNeXt-Tiny   & Confidence-inv & 0.9131 & 0.8701 & 0.1081 & 0.5654 & 0.5506 \\
ConvNeXt-Tiny   & Fusion CV      & 0.9356 & 0.8912 & 0.0700 & - & - \\
\bottomrule
\end{tabular}
\end{table*}

\subsection{MS COCO 2014: Full Metric Breakdown and Score Threshold Sensitivity}

Table~\ref{tab:coco_full} reports failure detection metrics for Faster R-CNN and RetinaNet on MS COCO 2014 ($N = 5,000$, failure defined as multi-label Brier exceeding the 75th percentile). Entropy is the dominant standalone signal for both detectors (AUC of $0.8839$ and $0.9121$), while Confidence$^{-1}$ performs substantially weaker ($0.6017$ and $0.7015$), inverting the pattern observed on ImageNet. MUS achieves moderate standalone AUC ($0.8003$ and $0.7469$), with Spearman correlations of $0.6133$ and $0.5095$ ($p < 10^{-6}$) confirming meaningful alignment with Brier-based failure  considerably stronger than in the single-label setting. Fusion CV matches or marginally exceeds entropy alone ($0.8851$ and $0.9133$), with MUS providing the primary complementary contribution, particularly for RetinaNet. Near-zero crop rates ($0.002$ and $0.001$) confirm adequate detection coverage for MUS computation across both architectures.

\begin{table*}[!ht]
\caption{MS COCO 2014 failure detection - Faster R-CNN and RetinaNet (5,000 images, failure = multi-label Brier $>$ 75th percentile). Entropy dominates as standalone signal; MUS contributes significantly in fusion, especially for RetinaNet.}
\label{tab:coco_full}
\centering\small
\setlength{\tabcolsep}{4pt}
\begin{tabular}{llccccccc}
\toprule
\textbf{Model} & \textbf{Method} & \textbf{AUC} & \textbf{ECE} &
\textbf{AUPR} & \textbf{FPR@80\%} & \textbf{FPR@95\%} &
\textbf{AURC} & \textbf{E-AURC} \\
\midrule
Faster R-CNN & MUS (Ours)     & 0.8003 & 0.0581 & 0.5835 & 0.3701 & 0.6781 & 0.10044 & 0.06919 \\
Faster R-CNN & Entropy        & 0.8839 & 0.0806 & 0.7412 & 0.2077 & 0.4512 & 0.06895 & 0.03770 \\
Faster R-CNN & Confidence-inv & 0.6017 & 0.2429 & 0.2984 & 0.6485 & 0.8512 & 0.17993 & 0.14868 \\
Faster R-CNN & Fusion CV      & 0.8851 & -    & 0.7426 & 0.2127 & -    & -     & -     \\
\midrule
RetinaNet    & MUS (Ours)     & 0.7469 & 0.0561 & 0.4501 & 0.4171 & 0.6901 & 0.11438 & 0.08313 \\
RetinaNet    & Entropy        & 0.9121 & 0.1032 & 0.8055 & 0.1568 & 0.3845 & 0.05981 & 0.02856 \\
RetinaNet    & Confidence-inv & 0.7015 & 0.1645 & 0.3777 & 0.4773 & 0.7576 & 0.13288 & 0.10163 \\
RetinaNet    & Fusion CV      & 0.9133 & -    & 0.8109 & 0.1476 & -    & -     & -     \\
\bottomrule
\end{tabular}
\end{table*}

Table~\ref{tab:coco_threshold} examines sensitivity of MUS AUC to the detection confidence threshold. Performance is fully stable between 0.05 and 0.10, with only modest degradation at 0.30 (Faster R-CNN: $-$0.0127; RetinaNet: $-$0.0446), confirming robustness to conservative thresholding choices.

\begin{table*}[!ht]
\caption{MS COCO detection confidence threshold sensitivity for MUS.
AUC is fully stable from 0.05 to 0.10 and degrades modestly at 0.30,
confirming robustness to conservative thresholding.}
\label{tab:coco_threshold}
\centering\small
\begin{tabular}{lcccc}
\toprule
\textbf{Threshold} & \textbf{FRCNN AUC} & \textbf{FRCNN AUPR} &
\textbf{RetinaNet AUC} & \textbf{RetinaNet AUPR} \\
\midrule
0.05 & 0.8003 & 0.5835 & 0.7469 & 0.4501 \\
0.10 & 0.8003 & 0.5835 & 0.7469 & 0.4501 \\
0.30 & 0.7876 & 0.5644 & 0.7023 & 0.3929 \\
\bottomrule
\end{tabular}
\end{table*}

% ============================================================
\section{Multi-Scale MUS for Nodule Detection}\label{app:multiscale}
% ============================================================

\begin{figure*}[!ht]
  \centering
  \includegraphics[width=\linewidth]{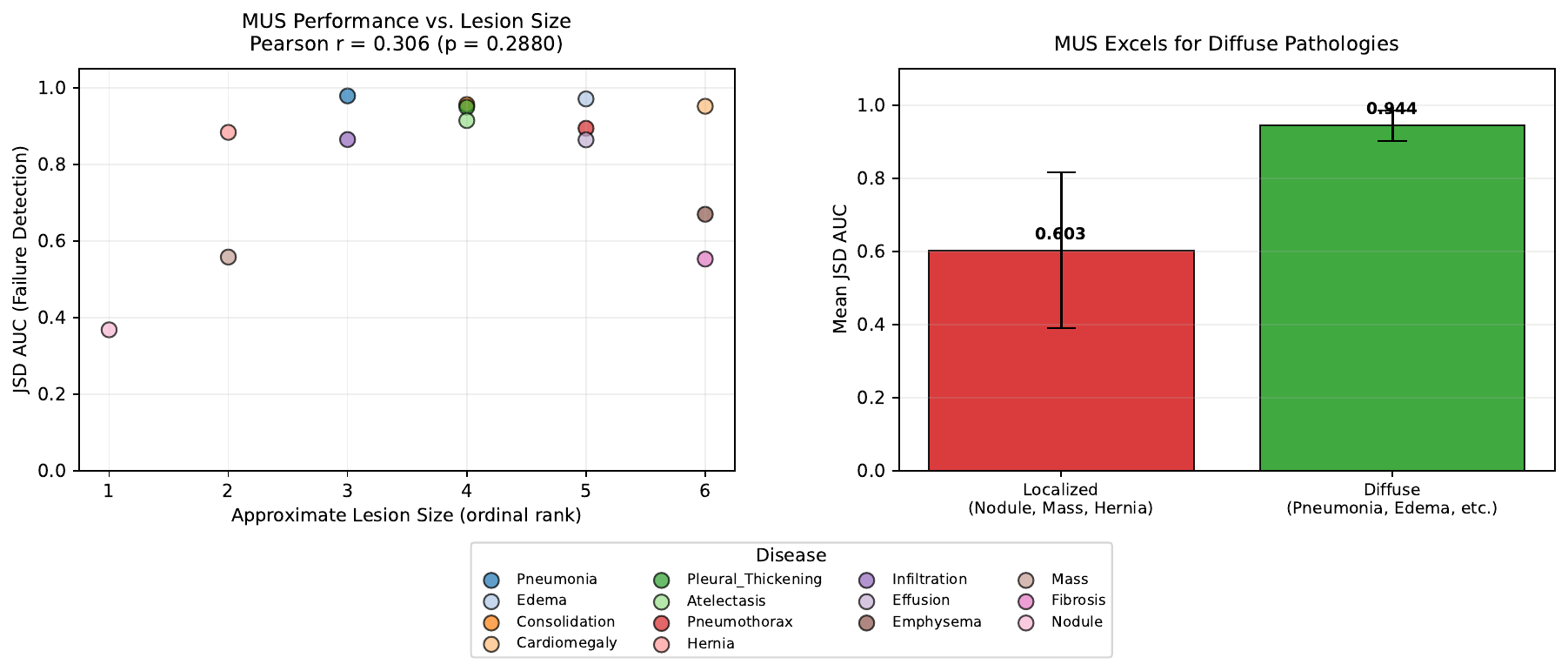}
  \caption{MUS discriminability as a function of lesion spatial extent (NIH ChestX-ray14, DenseNet-121). Per-class JSD AUC for failure detection plotted against approximate ordinal lesion size rank. Diffuse pathologies (group mean JSD AUC = 0.944) substantially outperform localized lesions (0.603), consistent with the five-crop resolution limit of ~3 cm.}
  \label{fig:pathology}
\end{figure*}

Figure~\ref{fig:pathology} shows MUS Discriminability as a Function of Lesion Spatial Extent (NIH ChestX-ray14, DenseNet-121). (Left) Per-class JSD AUC for failure detection plotted against approximate ordinal lesion size rank, where rank 1 corresponds to the most spatially localized pathologies (Nodule, Mass) and rank 6 to the most spatially diffuse (Pneumonia, Edema). Diffuse pathologies cluster at high JSD AUC (group mean of 0.944, shown in the right panel), while localized lesions cluster at low JSD AUC (group mean of 0.603). The Pearson correlation across all 14 classes is r = 0.306 (p = 0.288); the non-significant p-value reflects the bimodal rather than linear structure of the relationship  the two groups are well separated but the within-group variance is high. (Right) Group-level bar chart with 95\% bootstrap confidence intervals confirming the diffuse vs. localized split. This analysis directly motivates the multi-scale extension in Appendix \ref{app:multiscale}. Since the primary five-crop decomposition operates at 112×112 pixel resolution, pathologies smaller than approximately 3 cm in diameter at standard acquisition resolution fall below the crop resolution limit, explaining the low Nodule AUC (0.368). The multi-scale extension partially recovers performance for small focal lesions (Nodule: 0.368 → 0.589 AUC, Appendix~\ref{app:multiscale}).

\begin{figure*}[!ht]
  \centering
  \includegraphics[width=0.8\textwidth]{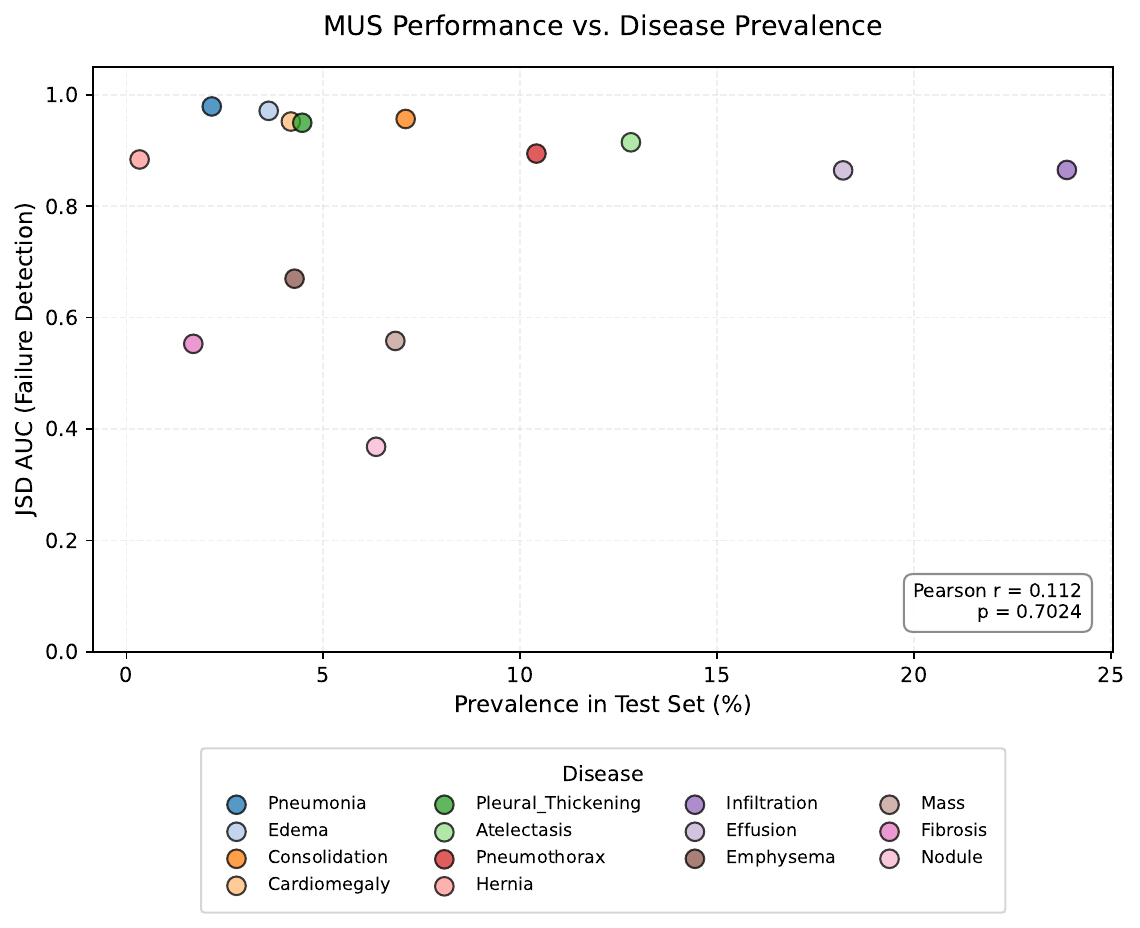}
  \caption{MUS performance vs. disease prevalence (NIH ChestX-ray14, DenseNet-121). Per-class JSD AUC for failure detection plotted against test-set prevalence (\%)}
  \label{fig:pervalence}
\end{figure*}

Figure~\ref{fig:pervalence} shows MUS Performance vs. Disease Prevalence (NIH ChestX-ray14, DenseNet-121). Per-class JSD AUC for failure detection plotted against test-set prevalence (\%), ranging from Hernia (0.34\%) to Infiltration (23.9\%). The near-zero Pearson correlation (r = 0.112, p = 0.702) confirms that MUS discriminability is not confounded by class frequency or label imbalance. This rules out the alternative explanation that commonly occurring classes benefit from better model calibration, which could artificially inflate uncertainty discrimination for frequent classes. The wide performance variation across classes of similar prevalence  for example, Pneumothorax (10.4\% prevalence, JSD AUC = 0.894) vs. Emphysema (4.3\% prevalence, JSD AUC = 0.670) confirms that lesion morphology and spatial extent, not dataset statistics, drive MUS discriminability. Rare classes such as Hernia (0.34\%) and Pneumonia (2.2\%) span both extremes of JSD AUC (0.884 and 0.979 respectively), further confirming the prevalence-independence of MUS performance.

Table~\ref{tab:multiscale} reports the multi-scale MUS extension for Nodule failure detection on NIH DenseNet-121. The baseline five-crop coarse decomposition achieves per-class JSD AUC of 0.3680 for Nodule below chance. A 16-crop fine-grained grid ($56 \times 56$ patches, covering the full image at finer resolution) partially recovers performance.

\begin{table}[!ht]
\caption{Multi-scale MUS for Nodule failure detection (NIH DenseNet-121). Fine crops use a 16-crop $56\times56$ grid. The fine-only result (0.589) is the primary reported improvement; the coarse+fine average (0.495) is lower because averaging with the weak coarse score (0.368) dilutes the signal.}

\label{tab:multiscale}
\centering\small
\setlength{\tabcolsep}{4pt}
\begin{tabular}{lc}
\toprule
\textbf{Method} & \textbf{Nodule JSD AUC} \\
\midrule
Baseline (5-crop coarse)                    & 0.3680 \\
\textbf{Fine MUS (16-crop, $56\times56$)}   & \textbf{0.5892} \\
Multi-scale avg (coarse $+$ fine)           & 0.4945 \\
\midrule
Improvement (Fine vs.\ Baseline)            & $+$0.2212 \\
\bottomrule
\end{tabular}
\end{table}

The fine-grid MUS score (0.589) substantially outperforms the coarse baseline (0.368). The multi-scale average (0.495) is lower because averaging with the weak coarse score dilutes the signal; therefore, we report the fine MUS as the primary multi-scale result throughout the paper. The remaining gap relative to diffuse pathology performance (0.85–0.98) reflects the inherent resolution limit of small focal lesion detection at standard imaging resolutions. This improvement confirms that the limitation is geometric five $112 \times 112$ crops cannot resolve nodules smaller than 3 cm rather than fundamental. The multi-scale MUS approach (using a 4×4 grid of 56×56 crops) is not adopted as the primary method due to its 16‑pass overhead; instead, it is presented as a direction for future work.

% ============================================================
\section{Clinical Utility and Demographic Fairness}\label{app:clinical}
% ============================================================

\subsection{When to Use SpatialUQ}\label{app:whentouse}

\begin{table}[!ht]
\caption{When to use SpatialUQ. This table summarizes findings already present in Sections~5--6}
\label{tab:whentouse}
\centering\small
\begin{tabular}{@{}p{3.9cm} p{3.9cm}@{}}
\toprule
\textbf{Use MUS when} & \textbf{Do not rely on MUS when} \\
\midrule
Diffuse, globally visible conditions & Highly localized lesions (nodules) \\
Overconfident multi-label classifiers & Well-calibrated single-label models \\
Moderate distribution shift & Severe covariate shift ($\rho \lesssim 0.15$) \\
No retraining or internals access & Generative VLMs with global pooling \\
\bottomrule
\end{tabular}
\end{table}

\subsection{Selective Prediction Coverage (NIH DenseNet-121)}

Table~\ref{tab:selective} reports selective prediction performance for NIH DenseNet-121 using MUS as the referral score. At a 10\% recall budget, MUS refers only 4.3\% of the test set (1,091 images) with a precision of 58.7\%, catching 640 failures demonstrating that a small, targeted referral volume captures a disproportionate share of model errors. Precision degrades modestly as recall increases (54.5\% at 20\%, 51.4\% at 30\%), reflecting the expected precision recall trade-off under a fixed failure distribution. Backbone AUC improves monotonically with referral rate, from $+0.0017$ at 5\% to $+0.0092$ at 30\% (baseline AUC $= 0.8080$), confirming that selectively abstaining on high-uncertainty predictions yields measurable downstream classification improvement. As we can see from Figure \ref{fig:selective_prediction_Pneumothorax} for Pneumothorax specifically , rejecting the top 10\% most uncertain images improves class AUC from $0.8471$ to $0.8495$ ($+0.0023$), with further gains at 20\% rejection ($0.8503, +0.0031$), illustrating that MUS-guided deferral provides clinically meaningful improvement even for already well-performing classes.

\begin{table*}[!ht]
\caption{Selective prediction - NIH DenseNet-121 (MUS as uncertainty score). Images are referred in descending MUS order. Budget = fraction of full test set referred; Errors Caught = number of detected failures out of 6,399 total.}
\label{tab:selective}
\centering\small
\begin{tabular}{ccccc}
\toprule
\textbf{Recall} & \textbf{Precision} & \textbf{Images Referred} &
\textbf{Budget} & \textbf{Errors Caught} \\
\midrule
10\% & 58.7\% & 1,091 & 4.3\%  & 640 / 6,399 \\
20\% & 54.5\% & 2,349 & 9.2\%  & 1,280 / 6,399 \\
30\% & 51.4\% & 3,732 & 14.6\% & 1,920 / 6,399 \\
\bottomrule
\end{tabular}
\end{table*}

\begin{figure*}[!ht]
  \centering
  \includegraphics[width=0.75\textwidth]{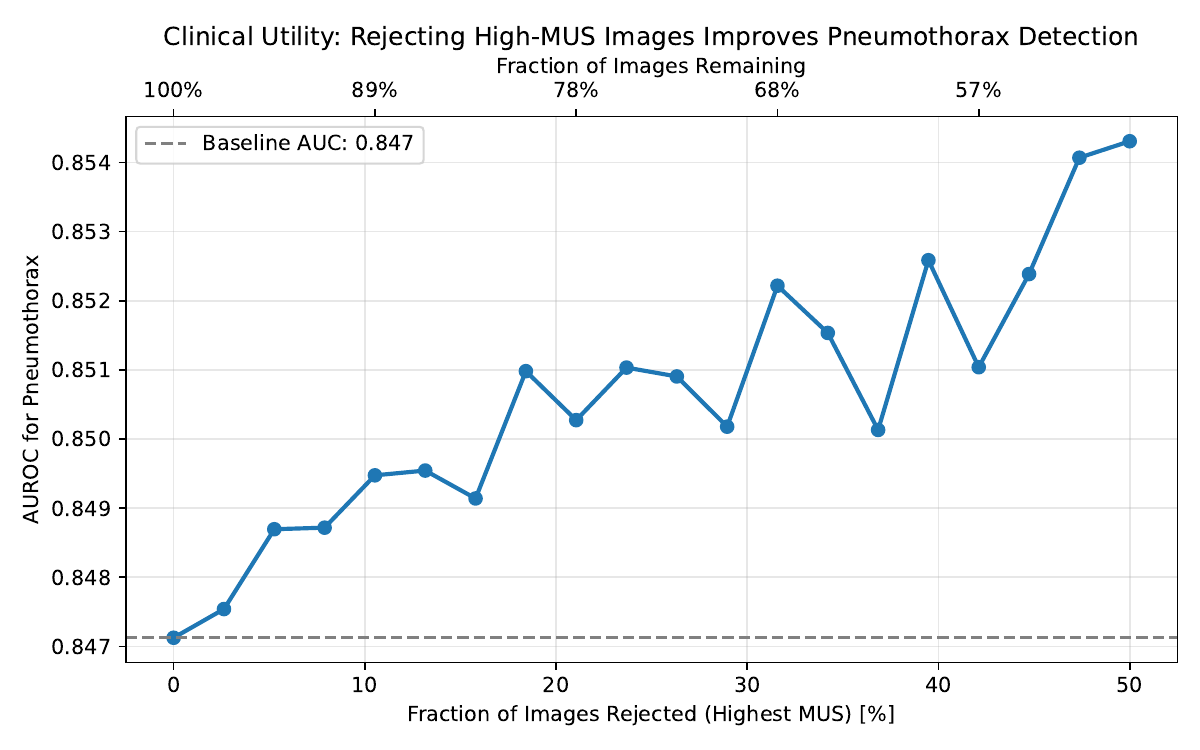}
  \caption{Selective prediction via Multi-view Uncertainty Score (MUS) for Pneumothorax detection. Performance (AUROC) shows a general upward trend as images with high spatial divergence are rejected, with local fluctuation reflecting sampling variability at intermediate rejection rates. The overall trajectory stabilizes above the baseline AUC of 0.847 at higher rejection rates, indicating that MUS effectively identifies atypical or spatially incoherent presentations whose removal improves downstream classification performance.}
  \label{fig:selective_prediction_Pneumothorax}
\end{figure*}

\subsection{Clinical Triage Simulation}

To operationalise selective prediction in a clinically realistic setting, we evaluate both a validation-calibrated 95\%-sensitivity threshold (ensuring no more than 5\% of diseased images are missed) and direct MUS-ranked referral at fixed recall levels.

\paragraph{} Under MUS-ranked referral, the NIH DenseNet-121 backbone achieves the most favourable profile: referring the 2,349 highest-MUS images (9.2\% of 25,596) recovers 20\% of Brier-defined failures at 54.5\% referral precision.

\paragraph{} EfficientNet-B4 (threshold $= 0.0801$) refers 97.7\% of images under the same 95\%-sensitivity rule (only a 2.3\% workload reduction) at a false negative rate of 0.93\%, reflecting its narrower uncertainty spread relative to DenseNet-121.

\paragraph{} ViT-B/16 (threshold $= 0.0782$) refers 96.4\% of images (a 3.6\% workload reduction [3.4\%, 3.9\%]) at a false negative rate of 1.22\%. The validation-calibrated threshold thus maintains 95\% sensitivity but barely reduces workload on EfficientNet-B4 and ViT-B/16; the MUS-ranked referral above is the practically useful operating point.

\subsection{Demographic Fairness}

Table~\ref{tab:fairness_densenet} reports MUS failure detection AUC stratified by age group and gender for DenseNet-121 on the NIH test set ($N = 25,596$). Gender parity is near-perfect (female $0.7838$ vs.\ male $0.7837$), indicating no sex-based bias in spatial uncertainty estimation. Age-stratified results reveal a modest performance gradient, with MUS AUC highest for patients under 40 ($0.8199$) and declining with age ($0.7512$ for age 60-80). We attribute this to two compounding factors: older patients exhibit higher rates of comorbid findings, producing genuinely ambiguous multi-label images where spatial inconsistency is a noisier failure signal; and the NIH training distribution skews toward middle-aged presentations, potentially yielding better-calibrated predictions for younger patients.

\begin{table}[!ht]
\caption{Demographic fairness audit - MUS failure detection AUC by age group and gender, NIH DenseNet-121 ($N = 25{,}596$). Gender parity is near-perfect (0.7838 vs.\ 0.7837). MUS AUC is slightly higher for younger patients, consistent with radiographic image quality and lesion acuity differences.}
\label{tab:fairness_densenet}
\centering\small
\begin{tabular}{lcc}
\toprule
\textbf{Subgroup} & \textbf{N} & \textbf{MUS AUC} \\
\midrule
Age $<$ 40   & 8,872  & 0.8199 \\
Age 40--60   & 11,151 & 0.7714 \\
Age 60--80   & 5,379  & 0.7512 \\
Age $>$ 80   & 190    & 0.7558 \\
\midrule
Female       & 10,714 & 0.7838 \\
Male         & 14,882 & 0.7837 \\
\midrule
\textbf{Overall} & \textbf{25,596} & \textbf{0.7837} \\
\bottomrule
\end{tabular}
\end{table}

Table~\ref{tab:fairness_effb4} reports the corresponding audit for EfficientNet-B4, which exhibits a markedly different age-stratified profile: MUS AUC increases with age ($0.7723$ for age $< 40$ vs.\ $0.7943$ for age 60-80), inverting the gradient observed for DenseNet-121. This reversal suggests that architectural differences in how compound-scaled convolutional features encode spatial inconsistency interact non-trivially with patient demographics. Gender disparity remains negligible (female $0.7813$ vs.\ male $0.7863$). Collectively, both audits confirm that MUS does not introduce systematic sex-based bias, while the architecture-dependent age gradient warrants consideration when deploying MUS across heterogeneous patient populations.

\begin{table}[!ht]
\caption{Demographic fairness audit - NIH EfficientNet-B4. Unlike DenseNet-121, EfficientNet-B4 shows slightly higher MUS AUC for older patients (Age 60--80: 0.7943 vs.\ $<$40: 0.7723), suggesting architectural differences in how spatial inconsistency manifests across patient demographics.}
\label{tab:fairness_effb4}
\centering\small
\begin{tabular}{lcc}
\toprule
\textbf{Subgroup} & \textbf{N} & \textbf{MUS AUC} \\
\midrule
Age $<$ 40   & 8,872  & 0.7723 \\
Age 40--60   & 11,151 & 0.7814 \\
Age 60--80   & 5,379  & 0.7943 \\
\midrule
Female       & 10,714 & 0.7813 \\
Male         & 14,882 & 0.7863 \\
\midrule
\textbf{Overall} & \textbf{25,596} & \textbf{0.7842} \\
\bottomrule
\end{tabular}
\end{table}

% ============================================================
\section{Implementation Details}\label{app:implementation}
% ============================================================

\paragraph{Hardware and training times.} 
All NIH experiments were run on a single NVIDIA Tesla T4 GPU (Kaggle). Training times: DenseNet-121 $\approx$8\,h; EfficientNet-B4 $\approx$10\,h; ViT-B/16 $\approx$6\,h (early stopping at epoch 17).

\paragraph{Inference cost.}
MUS requires 6 forward passes per image (1 global + 5 crops) under \texttt{torch.no\_grad()}, all runnable in a single batched call. Measured wall‑clock time on the NIH test set (25,596 images, batch size 32): DenseNet‑121 $\approx$8 min; EfficientNet‑B4 $\approx$10 min; ViT‑B/16 $\approx$29 min. MC‑Dropout (30 passes) required $\approx$39 min, $\approx$51 min, and $\approx$145 min respectively. a consistent 5$\times$ speedup matching the pass‑count reduction (6 vs.~30). A five‑member deep ensemble uses 5 forward passes (one per model), comparable to MUS at inference, but requires 5$\times$ the training cost and cannot be applied to a frozen checkpoint. MUS requires no additional training, making it suitable for frozen black‑box deployment.

\paragraph{Reproducibility.}
All experiments use fixed random seed 42, deterministic cuDNN, and result caching (\texttt{.npy}/\texttt{.csv} files). Code and data splits are included in the supplementary material and will be made publicly available upon acceptance. The ViT backbone is loaded via \texttt{timm} library (\texttt{vit\_base\_patch16\_224}). ImageNet and COCO models use standard \texttt{torchvision} pretrained checkpoints.

\paragraph{Baseline hyperparameters.}
Mahalanobis distance: PCA to 256 dimensions (retaining $\geq$95\% variance), global Gaussian fit on training features, Shrunk Covariance regularization. ODIN: temperature $T = 1000$, input perturbation $\varepsilon = 0.001$, softmax-based (not sigmoid), applied to all 14 output logits jointly. MC-Dropout: $T = 30$ stochastic forward passes with dropout active; mean predictive variance over classes used as uncertainty score. TTA: $N = 10$ augmentations (random scale 0.8-1.0, rotation $\pm10^\circ$; no horizontal flip to preserve laterality). Deep Ensemble: 5 independently trained DenseNet-121 models (seeds 42, 186, 456, 789, 1011); mean predictive entropy used.

\section{Extended Discussion and SOTA Analysis}\label{app:sota_analysis}

\subsection{Why MUS Works: Mechanism}\label{app:why_MUS_works}

MUS exploits a fundamental property of neural classifiers under uncertainty. When a model has confidently localized discriminative features, its predictions remain stable across spatial sub-regions: the full image and any crop partially containing those features will yield broadly similar output distributions, producing low JSD. When no such anchoring evidence exists  due to low image quality, atypical presentation, or distributional mismatch small changes in the spatial field of view produce large distributional shifts, yielding high JSD. This makes spatial inconsistency a more principled signal than predictive entropy: entropy reflects output-level uncertainty but cannot distinguish genuine label ambiguity from a complete absence of evidence. MUS specifically measures whether a model's confidence is spatially reproducible  a property that entropy is blind to. This mechanism holds consistently across model families. Spearman correlations between MUS and per-image Brier score on NIH are $\rho = 0.52$ (DenseNet-121), $\rho = 0.41$ (EfficientNet-B4), and $\rho = 0.44$ (ViT-B/16), all significant at $p < 10^{-6}$. The modest drop from DenseNet to EfficientNet and ViT is consistent with their more spatially coherent feature representations, which compress the dynamic range of MUS as discussed in Section \ref{sec:discussion} Score calibration further supports practical deployment: MUS achieves SCE of 0.04-0.08 across architectures, substantially better than most competing methods (Table~\ref{tab:nih_main}), confirming that MUS scores align closely with actual failure rates rather than producing well-ranked but poorly-scaled uncertainty estimates.

\begin{figure}[!ht]
  \centering
  \includegraphics[width=\linewidth]{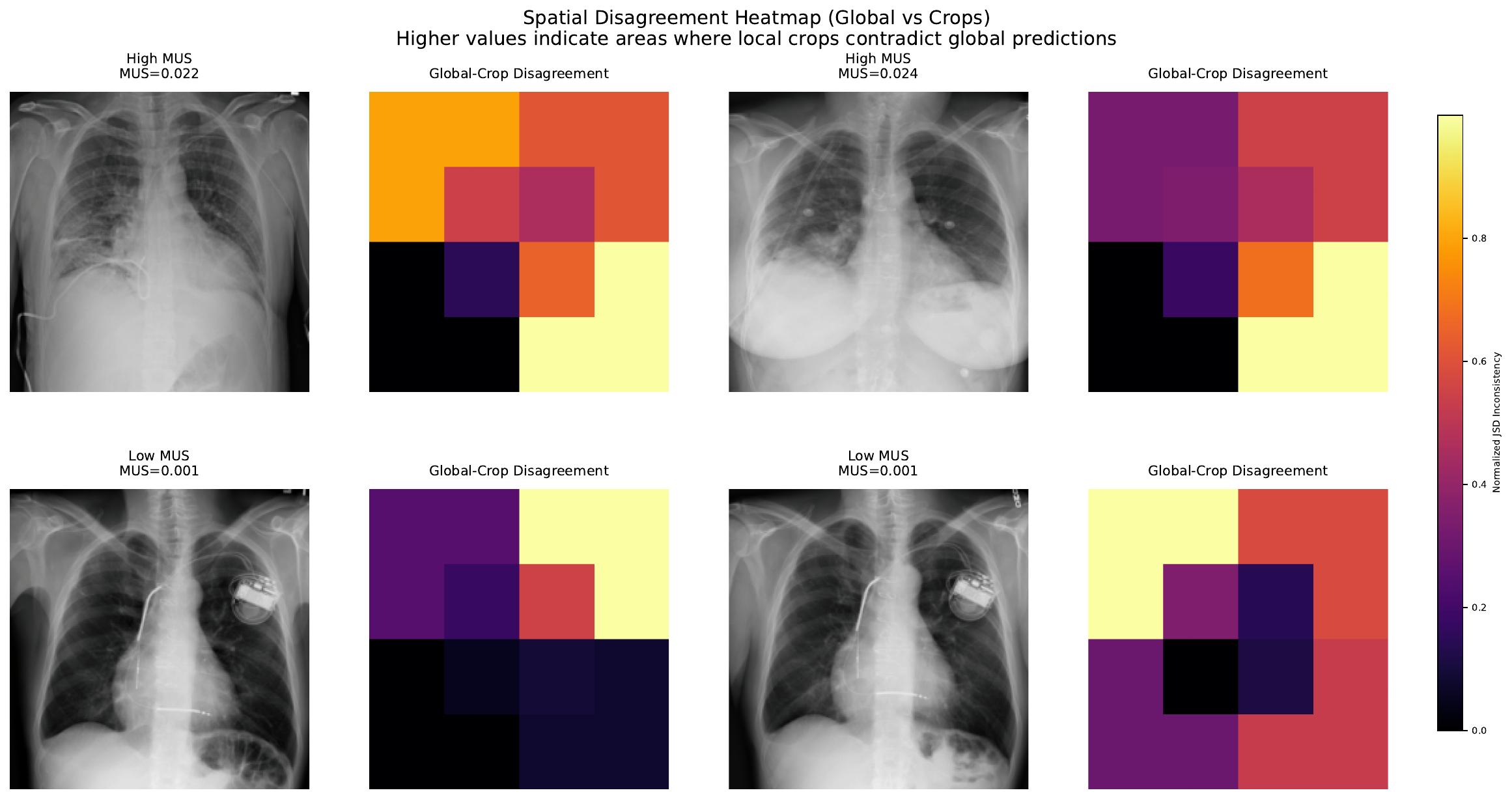}
  \caption{Spatial disagreement heatmaps. High MUS (top) shows strong per‑crop JSD    variation; low MUS (bottom) shows uniform consistency. Color scale: black (low) to yellow (high).}
  
  \label{fig:disagreement}
\end{figure}

Figure~\ref{fig:disagreement} shows Spatial Disagreement Heatmaps. Each panel shows a chest X-ray paired with its normalized per-crop JSD inconsistency map (color scale: black = low, yellow = high). On top row (High MUS), two images with MUS = 0.022 and 0.024 display strong spatial variation across quadrant crops, with particularly high disagreement localised to upper lung regions consistent with diffuse pathologies whose spatial extent changes meaningfully across crops. On bottom row (Low MUS), two images with MUS = 0.001 show uniformly low JSD across all crop positions, reflecting spatially stable model predictions regardless of field of view. The heatmaps provide interpretable spatial grounding for the scalar MUS score.

\begin{figure}[!ht]
  \centering
  \includegraphics[width=\linewidth]{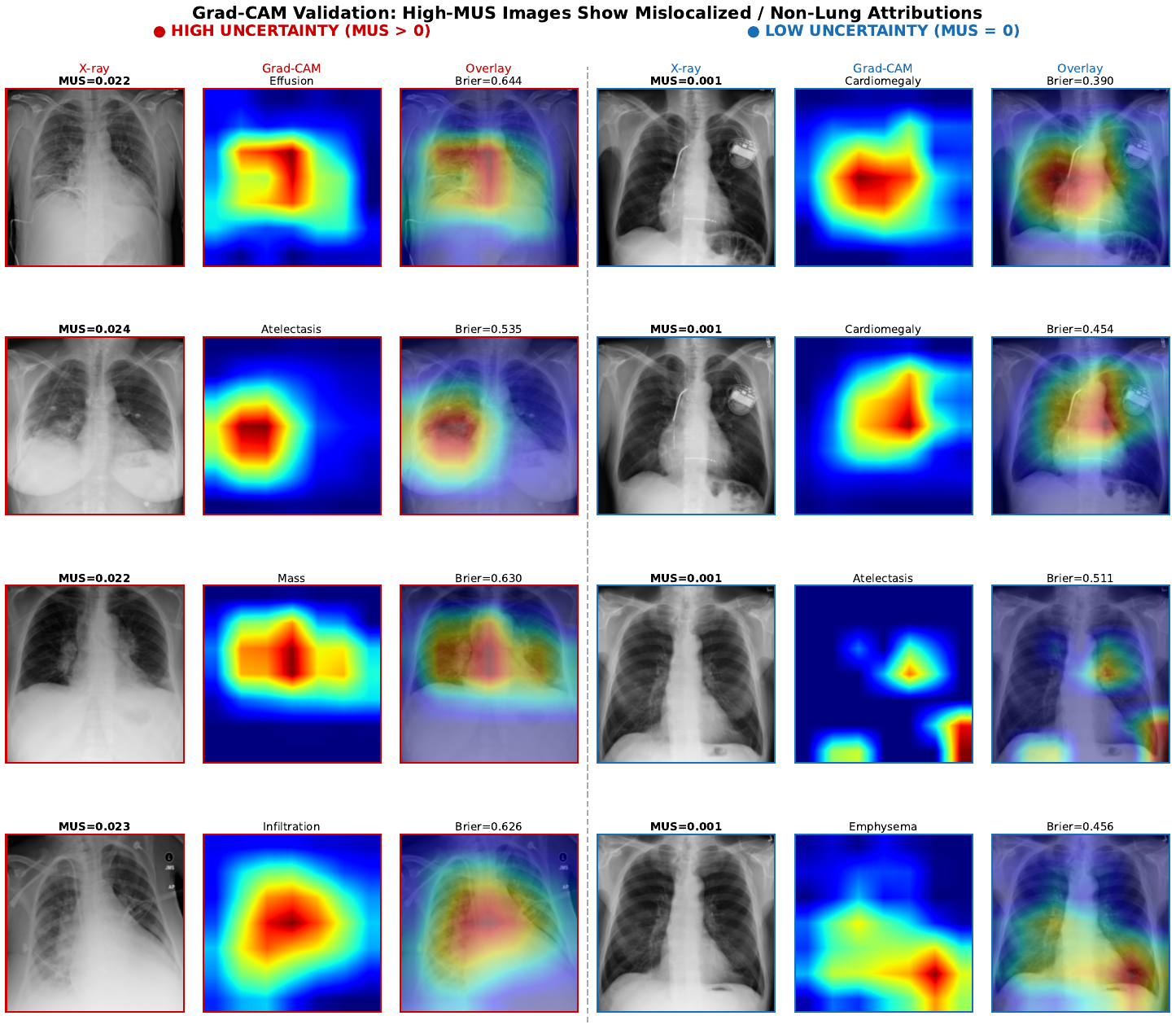}
  \caption{Grad‑CAM validation. High MUS (red border) produces diffuse, non‑lung attributions; low MUS (blue border) yields compact, localized activations.}
  \label{fig:gradcam}
\end{figure}

Figure~\ref{fig:gradcam} provides mechanistic validation for SpatialUQ via Grad-CAM attribution maps. High-MUS cases (red border; MUS = 0.022-0.024) consistently exhibit anatomically diffuse or mislocalized attributions, with activations concentrated outside the lung parenchyma, signifying a failure to anchor to discriminative features. Conversely, low-MUS cases (blue border; MUS $\approx$ 0) show compact, plausible attributions localized to expected pathology regions. This correlation between spatial inconsistency and attribution diffusion suggests that high MUS identifies instances where the model relies on spurious or non-specific features, whereas spatial consistency reflects the utilization of localized, interpretable features.

\subsection{Per-Method SOTA Analysis}

\paragraph{MC-Dropout~\citep{gal2016dropout}.}
MUS outperforms MC-Dropout in 4 of 5 settings: NIH DenseNet (+0.119 AUC), EfficientNet (+0.249), ViT (+0.100), CheXpert (+0.109), all $p < 10^{-6}$. The sole exception is VinBigData ($-0.150$ AUC), where parameter-space stochasticity captures epistemic uncertainty under extreme distribution shift more reliably than spatial consistency. MC-Dropout's primary disadvantage, 5$\times$ the inference passes of MUS (30 vs.\ 6), and requirement for training-time dropout layers are entirely avoided by MUS.

\paragraph{Deep Ensembles~\citep{lakshminarayanan2017simple}.}
The 5-member DenseNet ensemble achieves 0.813 AUC, outperforming single model MUS (0.784) but below the linear fusion (0.832, $\Delta = +0.019$, $p < 10^{-6}$). Ensembles require training 4 additional models 5$\times$ training cost and storage  making MUS + linear fusion a substantially more practical alternative in resource-constrained clinical settings.

\paragraph{ODIN~\citep{liang2017enhancing}.}
Achieves 0.759 AUC on NIH DenseNet (below MUS) but 0.800 AUC on NIH ViT the strongest single-method competitor on ViT. ODIN requires backpropagation through the input for gradient-based perturbation, adding inference overhead and gradient access requirements that MUS does not need.

\paragraph{DDU~\citep{mukhoti2021deterministic} and Mahalanobis~\citep{lee2018simple}.}
Both achieve 0.754 AUC on NIH DenseNet in our post-hoc implementation. Both require access to training-set feature distributions and storage of covariance matrices over the full training set ($\sim$73,891$\times$1,024 for DenseNet), while MUS has zero storage overhead. The original DDU additionally requires spectral normalization during training; our post-hoc approximation may underperform the full method.

\subsection{Per-Setting Performance Interpretation}

\paragraph{Architecture dependence on NIH.}
MUS achieves 0.784 (DenseNet-121), 0.784 (EfficientNet-B4), and 0.750 (ViT-B/16). The ViT gap reflects global self-attention's tendency to produce spatially coherent predictions even when incorrect, suppressing the inconsistency gradient. EfficientNet's compound scaling allocates more processing to fine-grained spatial features, explaining why its Nodule per-class JSD AUC (0.882) dramatically exceeds DenseNet's (0.368), EfficientNet's spatial inconsistency is more discriminative precisely where fine-grained resolution matters.

\paragraph{Entropy inversion on NIH.}
Raw entropy achieves AUC = 0.177 on NIH DenseNet significantly below random. The model trained with label smoothing and class-balanced BCE produces moderate entropy across most predictions; failing predictions tend to be overconfident (high probability assigned to wrong class), not uncertain, making entropy anti-correlated with failure in this configuration. This highlights that MUS and entropy are both architecture- and training-procedure-dependent, and that their combination requires care.

\paragraph{Detection vs.\ classification MUS performance.}
MUS achieves stronger standalone results on COCO (0.800 / 0.747 AUC) than on ImageNet (0.641-0.717 AUC) relative to baselines, reflecting the multi-label detection setting where spatial inconsistency across object classes is more structurally informative than in single-label recognition. Entropy dominates on COCO (0.884 / 0.912), consistent with detection models having better confidence calibration than multi-label medical classifiers.

\subsection{Computational Cost}
MUS requires 6 forward passes per image under \texttt{torch.no\_grad()}, all computable in a single batched call. Measured wall‑clock time on the NIH test set (25,596 images, batch size 32) for MUS only: DenseNet‑121 $\approx$8 min; EfficientNet‑B4 $\approx$10 min; ViT‑B/16 $\approx$29 min. For comparison, MC‑Dropout (30 passes) required $\approx$39 min, $\approx$51 min, and $\approx$145 min respectively, confirming a consistent 5$\times$ wall‑clock speedup that mirrors the pass‑count reduction. A five‑member deep ensemble uses 5 forward passes (one per model) comparable to MUS’s 6 passes at inference but requires 5$\times$ the training cost and cannot be applied to a frozen checkpoint without retraining. MUS’s inference cost is therefore competitive with ensembles while offering the decisive advantage of zero retraining.

\subsection{Future Directions}

The most important near-term extension is validation on MIMIC-CXR (227,835 radiographs, Beth Israel Deaconess), which would enable evaluation under temporal distribution shift, correlation with radiologist confidence scores in structured reports, and intersectional fairness analysis across race, insurance status, and comorbidity burden  dimensions not captured by NIH's age/gender metadata. Our finding that MUS AUC degrades for older patients (0.820 for $<$40 vs.\ 0.751 for 60--80 years) motivates a more thorough demographic audit on richer clinical metadata. Additional directions include adaptive crop placement via saliency guidance, MUS for transformer-based segmentation models (UNETR, SwinUNETR), and volumetric extension to 3D CT.

\section{Label Mapping for Zero-Shot Transfer}\label{app:label_mapping}

\subsection*{CheXpert Label Mapping (10 classes)}

The NIH ChestX-ray14 dataset contains 14 disease classes. CheXpert contains 14 classes of which 10 overlap with NIH. We map the shared classes directly by name. The four NIH classes absent from CheXpert (\textit{Emphysema}, \textit{Fibrosis}, \textit{Pleural\_Thickening}, \textit{Hernia}) are excluded from evaluation on CheXpert. Uncertain labels (U) in CheXpert are treated as negative (U-Zeros policy), consistent with prior work \citep{irvin2019chexpert}.

\subsection*{VinBigData Label Mapping (10 classes)} 

VinBigData contains 28 annotated findings. We map the 10 classes that correspond directly to NIH labels: \textit{Atelectasis}, \textit{Cardiomegaly}, \textit{Consolidation}, \textit{Effusion}, \textit{Infiltration}, \textit{Mass},
\textit{Nodule}, \textit{Pleural\_Thickening}, \textit{Pneumothorax}, \textit{Fibrosis}. The four NIH classes absent from VinBigData (\textit{Edema}, \textit{Pneumonia}, \textit{Emphysema}, \textit{Hernia}) are excluded. All VinBigData annotations are radiologist-verified bounding boxes; we convert these to image-level binary labels (positive if any box of that class is present).

\begin{remark}
The class-averaged bound inherits slack from Jensen's inequality in Step 2 when per-class JSD values are heterogeneous; the per-class bound itself is tight as $p_c, q_c \to \frac{1}{2}$. The purpose of Lemma~\ref{lemma:bound} is to establish that $s_{\mathrm{bern}}$ formally upper bounds mean absolute class-level disagreement, guaranteeing that a large MUS score \emph{necessarily} implies substantial shifts between global and local predictions, rather than to provide a numerically tight certificate. The empirical Spearman correlation between $s_{\mathrm{bern}}$ and per-image Brier score ($p < 0.0001$ across all medical benchmarks) confirms the practical utility of this relationship.
\end{remark}

\end{document}